\title{The Utility of Sparse Representations for Control in Reinforcement Learning}
\author{Vincent Liu\textsuperscript{1}, Raksha Kumaraswamy\textsuperscript{1}, Lei Le\textsuperscript{2}, Martha White\textsuperscript{1}\\
\textsuperscript{1}Department of Computing Science, University of Alberta, Edmonton, Canada\\
\{vliu1, kumarasw, whitem\}@ualberta.ca\\
\textsuperscript{2}Department of Computer Science, Indiana University Bloomington, Indiana, USA\\
leile@indiana.edu
}
\newcommand{\makevector}[1]{{\mathbf #1}}
\newcommand{\bvec}{{\makevector{b}}}
\newcommand{\wvec}{{\makevector{w}}}
\newcommand{\xvec}{{\makevector{x}}}
\newcommand{\weights}{{\makevector{w}}}
\newcommand{\thetavec}{{\boldsymbol{\theta}}}
\newcommand{\betahat}{\hat{\beta}}
\newcommand{\phivec}{{\boldsymbol{\phi}}}
\newcommand{\RR}{{\mathbb{R}}}
\newcommand{\calN}{{\mathcal{N}}}
\newcommand{\Wmat}{\mathbf{W}}
\newcommand{\Xmat}{\mathbf{X}}
\newcommand{\wdim}{d}
\newtheorem{corollary}{Corollary}
\newtheorem{theorem}{Theorem}
 \newcommand{\defeq}{:=}
\newcommand{\States}{\mathcal{S}}
\newcommand{\Actions}{\mathcal{A}}
\newcommand{\inv}{{\raisebox{.2ex}{$\scriptscriptstyle-1$}}}
\newcommand{\citep}[1]{\cite{#1}}
\begin{document}
\maketitle
\begin{abstract}
We investigate sparse representations for control in reinforcement learning. While these representations are widely used in computer vision, their prevalence in reinforcement learning is limited to sparse coding where extracting representations for new data can be computationally intensive.
Here, we begin by demonstrating that learning a control policy incrementally with a representation from a standard neural network fails in classic control domains, whereas learning with a representation obtained from a neural network that has sparsity properties enforced is effective.
We provide evidence that the reason for this is that the sparse representation provides locality, and so avoids catastrophic interference, and particularly keeps consistent, stable values for bootstrapping.
We then discuss how to learn such sparse representations. We explore the idea of Distributional Regularizers, where the activation of hidden nodes is encouraged to match a particular distribution that results in sparse activation across time. We identify a simple but effective way to obtain sparse representations, not afforded by previously proposed strategies, making it more practical for further investigation into sparse representations for reinforcement learning.
\end{abstract}

\section{Introduction}

Learning performance in artificial intelligence systems is highly dependent on the data representation---the features.
An effective representation captures important attributes of the state (or instance), as well as simplifies the estimation of predictors. Consider a reinforcement learning agent. A local representation enables the agent to more feasibly make accurate predictions for that local region, because the local dynamics are likely to be a simpler function than learning global dynamics. Additionally, such a representation can help prevent forgetting or interference \cite{mccloskey1989catastrophic,french1991using}, by only updating local weights, as opposed to dense representations where any update would modify many weights. At the same time, it is important to have a distributed representation \cite{bengio2009learning,bengio2013representation}, where the representation for an input is distributed across multiple features or attributes, promoting generalization and a more compact representation.

Such properties can be well captured by sparse representations: those for which only a few features are active for a given input (Figure \ref{fg:sparse_rep}). Enforcing sparsity promotes identifying key attributes, because it encourages the input to be well-described by a small subset of attributes. Sparsity, then, promotes locality, because local inputs are likely to share similar attributes (similar activation patterns) with less overlap to non-local inputs. In fact, many hand-crafted features are sparse representations, including tile coding \cite{sutton1996generalization,sutton1998reinforcement}, radial basis functions and sparse distributed memory \cite{kanerva1988sparse,ratitch2004sparse}.
Other useful properties of sparse representations---which can be seen as projecting data into a higher-dimensional space---include invariance \cite{goodfellow2009measuring,rifai2011contractive}; decorrelated features per instance \cite{foldiak1990forming}; improved computational efficiency for updating weights in the predictor, as only weights corresponding to active features need to be updates; and enabling linear separability in the high-dimensional space \cite{cover1965geometrical}, which facilitates the learning of a simple linear predictor.
Further, such sparse, distributed representations have been observed in the brain \citep{olshausen1997sparse,quian2010measuring,ahmad2015properties}.

\begin{figure}[t]
 \centering
 \includegraphics[width=0.45\textwidth]{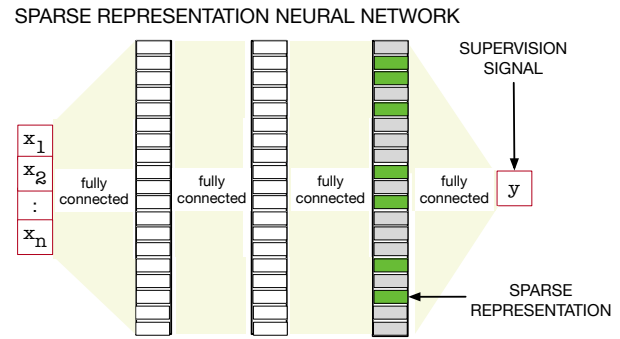}
 \caption{A neural network with dense connections producing a sparse representation: Sparse Representation Neural Network (SR-NN). The green squares indicate active (non-zero) units, making a sparse last hidden layer where only a small percentage of units are active. This contrasts a network with sparse connections---which is often also called sparse. Sparse connections remove
 connections
 %arrows
 between nodes,
 but are likely to still produce a dense representation.}
 \label{fg:sparse_rep}
\end{figure}

Traditionally, sparse representations have been common for control in reinforcement learning, such as tile coding and radial basis functions \citep{sutton1998reinforcement}. They are effective for incremental learning, but can be difficult to scale to high-dimensional inputs because they grow exponentially with input dimension. Neural networks much more feasibly enable scaling to high-dimensional inputs, such as images, but can be problematic when used with incremental training. Instead, techniques like target networks, inspired by batch methods such as fitted Q-iteration \cite{riedmiller2005neural}, have been necessary for many of the successes of control with neural networks. We provide some evidence in this paper that this modification is necessary with dense, but not sparse, networks because the reinforcement learning agent bootstraps off its own estimates. If the value in other states are overwritten, the agent will bootstrap off inaccurate estimate. Local representations, however, are much less likely to suffer from interference and these issues with bootstrapping. Learned sparse representations, then, are a promising strategy to obtain the benefits of previously common, fixed sparse representations with the scaling of neural networks.

Learning sparse representations, however, does remain a challenge. There have been some approaches developed to learning sparse representations incrementally, particularly through factorization approaches for dictionary learning \cite{mairal2009supervised,mairal2010online,le2017learning} or for general sparse distributions \cite{olshausen1997sparse,olshausen2002sparse,teh2003energy,ranzato2006efficient,ranzato2007sparse,lee2008sparse}, like Boltzmann machines. In sparse coding, for example, the sparse representation learning problem is formulated as a matrix factorization, where input instances are reconstructed using a sparse, or small subset, of a large dictionary. Many of the methods for general sparse distribution, however, are expensive or complex to train and those based on sparse coding have been found to have serious out-of-sample issues \cite{mairal2009supervised,lemme2012online,le2017learning}.

There are fewer methods using feedforward neural network architectures. Certain activation functions---such as linear threshold units (LTU) \cite{mcculloch1943logical} and rectified linear units (ReLU) \cite{glorot2011deep}---naturally provide some level of sparsity, but of course provide no such guarantees. Early work on catastrophic interference investigated some simple heuristics for encouraging sparsity, such as node sharpening \citep{french1991using}. Though catastrophic interference was reduced, the resulting networks were still quite dense.\footnote{There have been strategies developed for catastrophic interference that rely on rehearsal or dedicating subparts of the network to particular tasks. This work is a complementary direction for understanding catastrophic interference for a sequential multi-task setting. We explore specifically the utility of sparse representations for alleviating interference for RL agents learning incrementally on one task, but do not necessarily imply that it is the only strategy to alleviate such interference. The comparisons in this work, therefore, focus on other strategies to learn sparse representations.}
%k-sparse auto-encoders \cite{makhzani2013k} and Winnner-Take-All auto-encoders \cite{makhzani2015winner} both use a top-k criterion for sparsity. characterized by a hyperparameter \textit{k}.
k-sparse auto-encoders \cite{makhzani2013k} use a top-k constraint per instance: only the top $k$ nodes with largest activations are kept, and the rest are zeroed. Winnner-Take-All auto-encoders \cite{makhzani2015winner} use a k\% response constraint per node across instances, during training, to promote sparse activations of the node over time.
%k-sparse auto-encoders \cite{makhzani2013k} and Winnner-Take-All auto-encoders \cite{makhzani2015winner} use a simple heuristic to guarantee only $k$ units are active: for each input, only the top $k$ nodes with largest activations are kept, and the rest are zeroed.
These approaches, however, can be problematic---as we reaffirm in this work---because they tend to truncate non-negligible values or produce insufficiently sparse representations. Another line of work has investigated learning or specifying sparse activation functions for neural networks \cite{triesch2005agradient,ranzato2006efficient,lemme2012online,arpit2015regularized}, but used a sigmoid activation which is unlikely to result in sparse representations. They define sparsity based on norms of the vector, rather than activation level.

In this work, we first highlight that learned sparse representations can significantly improve control performance, under an incremental learning setting, compared to dense neural networks. We visualize the activation of the hidden nodes for the sparse representation as well as the action-values for particular states. These provide evidence that locality helps avoid catastrophic interference and improves accuracy of action-values for bootstrapping. We then investigate a simple strategy for encouraging sparsity in neural networks: Distributional Regularizers. This approach flexibly enables any desired architecture, simply with the addition of a KL divergence on the activation level for a node. We show that direct use of such a regularizer can cause dead filters or collapse---activation concentrating on a few nodes---potentially explaining why this simple strategy has not yet found wide-spread use. We show that a simple clipping is sufficient to obtain effective sparse representations, and conclude with a comparison to several other strategies for obtaining a sparse representation on the same benchmark domains.

\section{Background}
\label{Background}

In reinforcement learning (RL), an agent interacts with its environment, receiving observations and selecting actions to maximize a reward signal. The environment is formalized by a Markov decision process (MDP), with states $\States$, actions $\Actions$, transition probabilities $\text{Pr}:\States\times\Actions\times\States\to[0,1]$, rewards $R:\States\times\Actions\times\States\to\RR$ and discount function $\gamma:\States\times\Actions\times\States\to [0,1]$ \cite{white2017unifying}.

One algorithm for on-policy control is Sarsa, where the agent updates its action-values for its current policy and acts near-greedily according to these action-values. The action-values for a policy $\pi: \States \times \Actions \to [0,1]$ are the expected return for that policy, starting from state $s$ and action $a$:
\begin{align}
 Q^\pi(s, a) &= \mathbb{E}[G_t | S_t = s, A_t = a] \\
 &\text{ where, } G_t = R_{t+1} + \gamma_{t+1} G_{t+1} \nonumber
\end{align}

These action-values can be estimated with function approximation, such as with neural network. Because the expected return is a real-value target, such a neural network typically uses a linear activation on the last layer:
\begin{equation}
Q^\pi(s, a) \approx \hat{Q}_{\weights, \thetavec}(s, a) \defeq \phivec_\thetavec(s,a)^\top \weights
\label{eq:q_sa_1}
\end{equation}
where $\weights \in \RR^\wdim$ is the weights in the last layer and $\phivec_\theta : \States \times \Actions \to \RR^\wdim$ is the \emph{representation} learned by the network with weights $\thetavec$, composed of all the hidden layers in the network. The function $\phivec_\thetavec(s,a)$ corresponds to the last layer in the network, with $\thetavec$ the weights of the network. The efficacy of the action-value approximation, therefore, relies on this representation $\phivec_\thetavec(s,a)$.

\section{The Utility of Sparsity for Control}

We begin by highlighting the utility of sparsity for control before discussing how to learn sparse representations.
We show that two sparse representations---tile coding and sparse representation learned by a neural network (referred to as \textit{SR-NN} from hereon)---both significantly improve stability in control.
We choose tile coding, a static representation, as a baseline to compare to, as it known to perform very well in the benchmark RL domains we experiment with \cite{sutton1998reinforcement}.
We hypothesize that the main reason is due to catastrophic interference, which is much less problematic for the local representations typically provided by a sparse representations.
We show both that
SR-NN does
appear to have more stable action-values for bootstrapping, and that the learned sparse representation is local, providing some evidence for this hypothesis.

We evaluate control performance on four benchmark domains: Mountain Car, Puddle World, Acrobot and Catcher. All domains are episodic, with discount set to 1 until termination. We choose these domains because they are well-understood, and typically considered relatively simple. A priori, it would be expected that a standard action-value method, like Sarsa, with a two-layer neural network, should be capable of learning a near-optimal policy in all four of these domains. We provide details about the domains in the Appendix.

The experimental set-up is as follows. To extract a representation with a neural network,
to be used for control, we pre-train the neural network on a batch of data with a mean-squared temporal difference error (MSTDE) objective and
the applicable regularization strategies.
The training data consists of trajectories generated by a fixed policy that explores much of the space in the various domains.
For the SR-NN, we use our distributional regularization strategy, described in a later section.
This learned representation is then fixed, and used by a (fully incremental) Sarsa(0) agent for learning a control policy, where only the weights $\wvec$ on the last layer are updated. The meta-parameters for the batch-trained neural network producing the representation and the Sarsa agent were swept in a wide range, and chosen based on control performance. The aim is to provide the best opportunity for a regular feed-forward network (NN) to learn on these problems, as it is more sensitive to its meta-parameters than the SR-NN. Additional details on ranges and objectives are provided in the Appendix.

We choose this two-stage training regime to remove confounding factors in difficulties of training neural networks incrementally. Our goal here is to identify if a sparse representation can improve control performance, and if so, why. The networks are trained with an objective for learning values, on a large batch of data generated by a policy that covers the space; the learned representations are capable of representing the optimal policy. We investigate their utility for fully incremental learning. Outside of this carefully controlled experiment, we advocate for learning the representation incrementally, for the task faced by the agent.

\begin{figure}[h]
 \centering
 \includegraphics[width=0.98\linewidth]{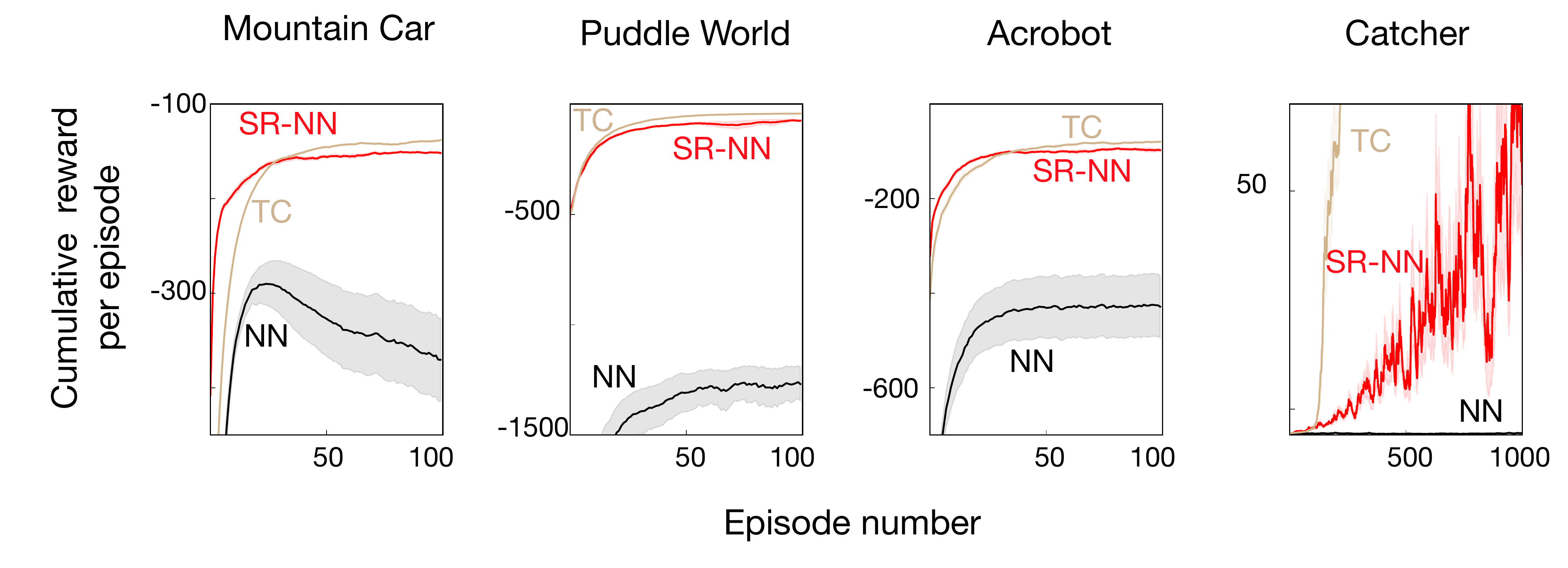}
 \caption{Learning curves for Sarsa(0) comparing SR-NN, Tile Coding and vanilla NN in the four domains.
 }
 \label{fg:exp-sparse}
\end{figure}

The learning curves for the four domains, with Tile-Coding (TC), SR-NN and NN
, are shown in Figure \ref{fg:exp-sparse}. Both SR-NN and NN used two-layers, of size $[32,256]$, with ReLU activations. The NNs performs surprisingly poorly, in some case increasing and then decreasing in performance (Mountain Car), and in others failing altogether (Catcher). In all the benchmark RL domains, the baseline sparse representation, TC, performs well, as expected. Specifically in Catcher, TC learns a close-to-optimal policy as the representation is powerful. The learned SR-NN performs as well in all domains, and is effective for learning in Catcher, whereas NN performs really poorly in all domains, and does not learn anything in Catcher. Both SR-NN and NN representations were trained in the same regime, with similar representational capabilities. Yet, the sparsity of SR-NN enables the Sarsa(0) agent to learn, where the regular feed-forward NN does not. We investigate this effect further in the next sets of experiments, to better understand the phenomenon.

\begin{figure}[t]
 \centering
 \includegraphics[width=0.98\linewidth]{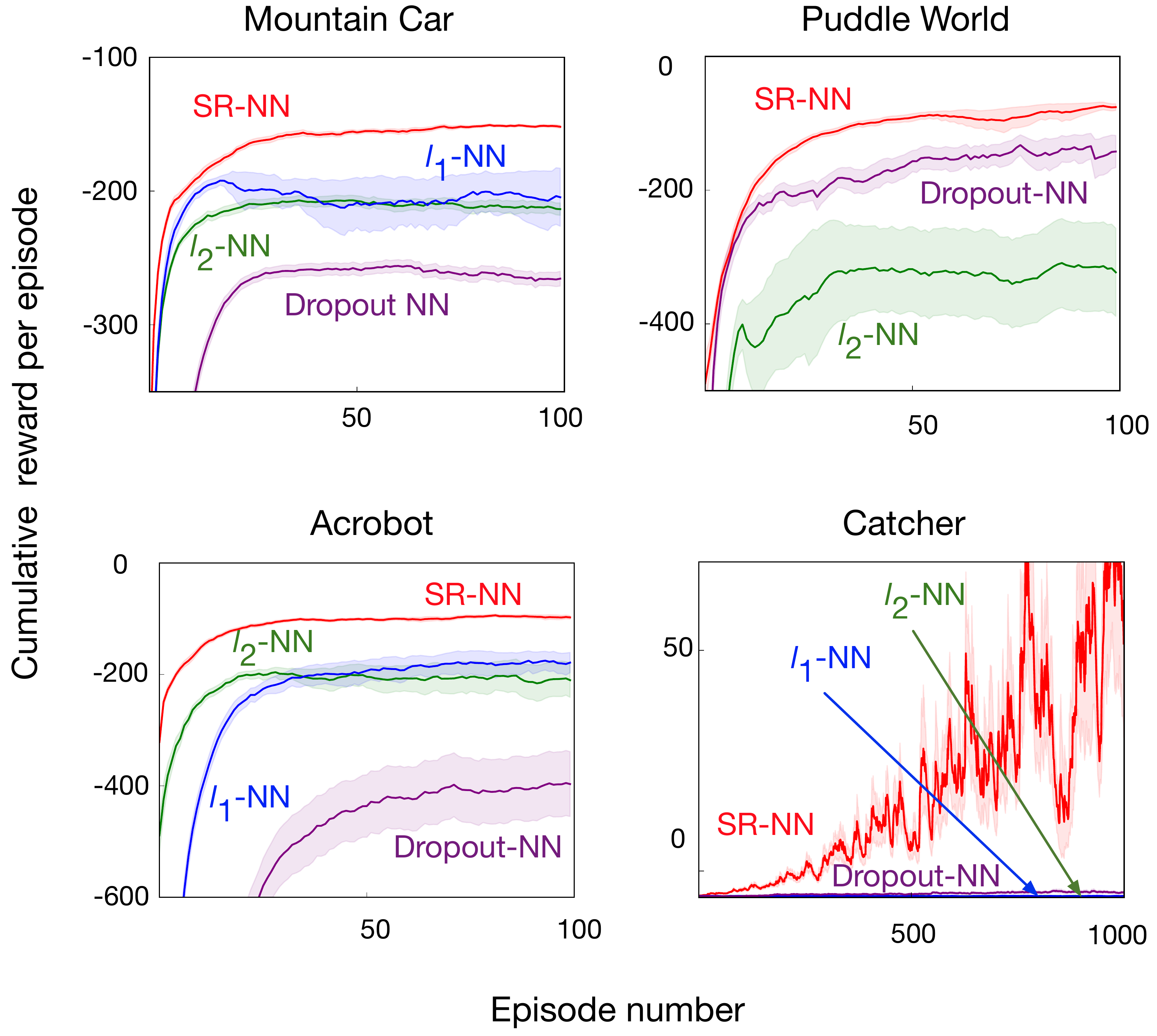}
 \caption{Learning curves for Sarsa(0) comparing SR-NN to the regularized representations.
 All representations except $\ell_1$-NN in Puddle World could reach goals more than 70 times out of 100. $\ell_1$ does poorly in Puddle World, and is not visible. }
 \label{fg:control-reg}
\end{figure}

\begin{figure*}[t]
 \includegraphics[width=0.98\linewidth,scale=0.2]{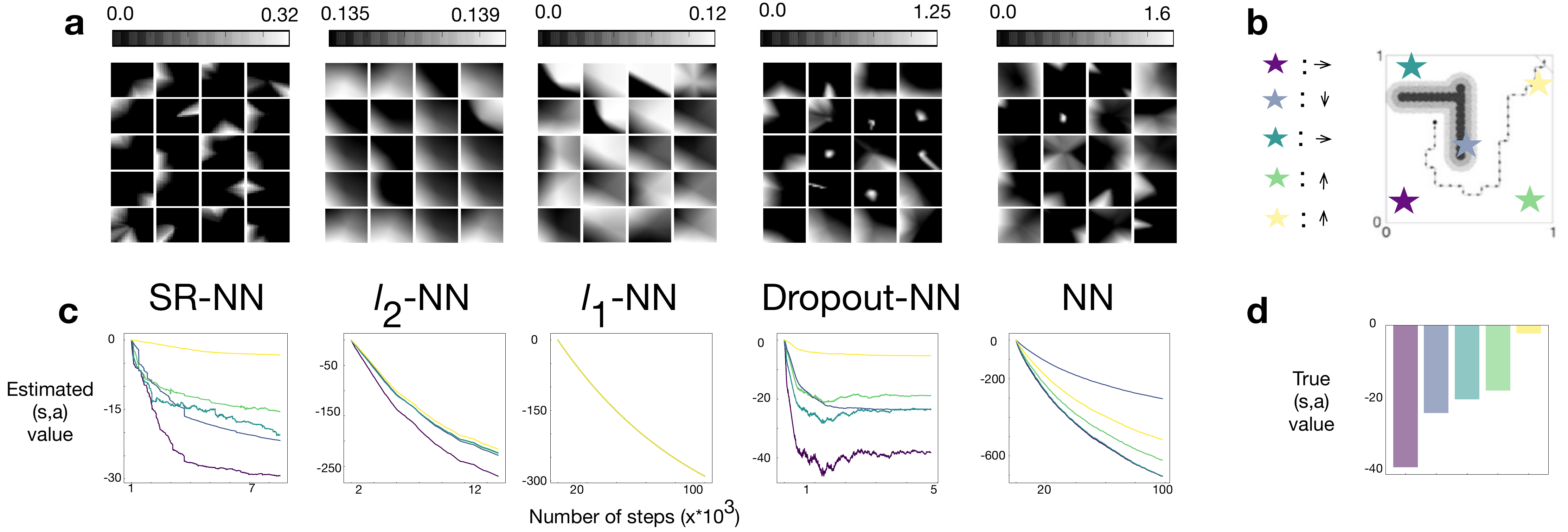}
 \caption{A study in Puddle World to investigate the effect of locality during on-policy control. (\textbf{a}) The activation maps for 20 randomly chosen neurons for different representations - each cell in the heatmap corresponds to the complete Puddle World state-space. The activation maps, and magnitude of activation of SR-NN are visibly sparser, and lower, when compared to Dropout and NN. $\ell_1$ and $\ell_2$ are quite dense in terms of activation area, whereas the magnitudes are really low - due to regularization of the network weights. (\textbf{b}) A visualization of the domain, denoting the selected state-action pairs used in the analysis. (\textbf{c}) The estimated state-action values for the selected configurations during on-policy control with Sarsa(0) ($\epsilon=0.1$), while utilizing the specific representation of interest. (Sarsa(0) budget - 100 episodes with 1k episode cut-off for all representations).
 (\textbf{d}) The true state-action values for the selected configurations with an $\epsilon=0.1$-optimal policy, estimated from 100k Monte Carlo rollouts.
 }
 \label{fg:controlProp_reg_pw}
\end{figure*}

To determine if the main impact of the sparse representation is simply from regularization, preventing overfitting, we tested several regularization strategies for the neural network. These include $\ell_2$ and $\ell_1$ on the weights of the network ($\ell_2$-NN and $\ell_1$-NN respectively) and Dropout on the activation \cite{srivastava2014dropout} (Dropout-NN). The $\ell_1$ regularizer actually encourages weights to go to zero, reducing the number of connections, but does not necessarily provide a sparse representation. In Figure \ref{fg:control-reg}, we can see that regularization is unlikely to account for the improvements in control. SR-NN performs well across all domains, whereas none of the regularization strategies consistently perform well. $\ell_1$-NN and $\ell_2$-NN perform well in Mountain Car during early learning, but fail in other domains. Dropout-NN performs poorly in all domains except Puddle World. Interestingly, in this one domain, Dropout-NN appears to have learned a sparse representation, based on the heatmap shown in Figure \ref{fg:controlProp_reg_pw}. It has been observed that Dropout can at times learn sparse representations \cite{banino2018vector}, but not consistently, as corroborated by our experiments.

We next investigate the hypothesis that locality is preventing catastrophic interference. We first investigate the locality of the representations, as well as examining the bootstrap values over time. We show results for Puddle World here, as it is an interpretable two-dimensional domain; similar experiments for other domains are in the Appendix.

Figure \ref{fg:controlProp_reg_pw}(a) shows the activation map of randomly selected hidden neurons with the different networks. We can see that each hidden neuron in SR-NN only responds to a local region of the input space, while some hidden neurons in NN respond to a large part of the space. Consequently, when one state is updated in a part of the space with the NN representation, it is more likely to significantly shift the values in other parts of the space, as compared to the more local SR-NN. The $\ell_2$-NN, and $\ell_1$-NN representations do not exhibit any discernible locality properties.
Dropout-NN does achieve some degree of locality in this domain, as mentioned earlier.

\begin{figure}[tb]
 \centering
 \includegraphics[width=\linewidth]{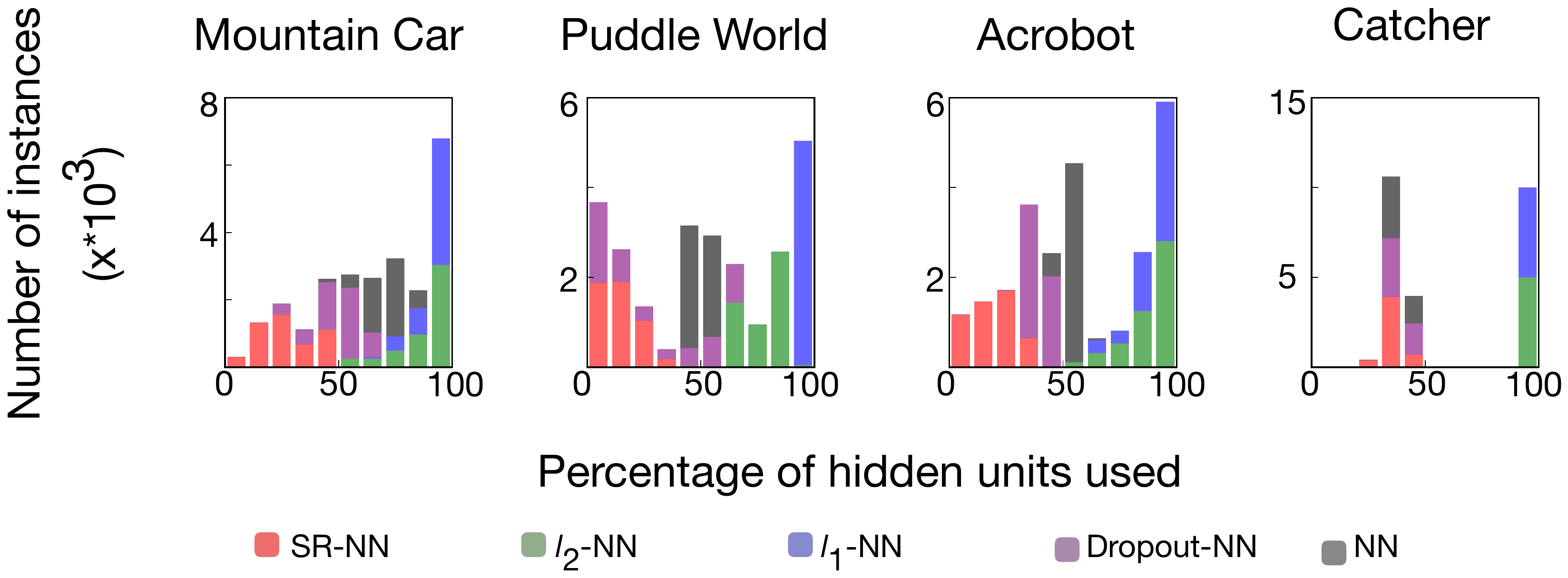}
 \caption{Instance sparsity comparing SR-NN to the regularized variants and vanilla NN.
 The percentage evaluation is designed to disregard units that are never active across all samples in the batch (dead units).}
 \label{fg:hist-reg}
\end{figure}

To show the stability (or lack of stability) of bootstrap
targets used during control, we select five states and evaluate their action-values for the optimal action over the course of learning. These states are distributed across the observation space, depicted in Figure \ref{fg:controlProp_reg_pw}(b).
The bootstrap estimates, that correspond to the algorithm settings for the learning curves, are plotted in Figure \ref{fg:controlProp_reg_pw}(c). We can see that the relative ordering of the value estimates is maintained with SR-NN and Dropout-NN, which were the two NNs effective for on-policy control, and that their values converge to near the true values (given in Figure \ref{fg:controlProp_reg_pw}(d)). The other representations, on the other hand, have very poor estimates. Moreover, these estimates seem to decrease together, suggesting interference is causing overgeneralization to reduce values in other states.

Finally, we report additional measures of locality, to determine if the successful methods are indeed sparse. The heatmaps provide some evidence of locality, but are more qualitative than quantitative. We report two qualitative measures: \emph{instance sparsity} and \textit{activation overlap}.
Instance sparsity corresponds to the percentage of active units for each input. A sparse representation should be instance sparse, where most inputs produce relatively low percentage activation. As shown in Figure \ref{fg:hist-reg}, SR-NN has consistently low instance sparsity across all four domains, with slightly higher level in Catcher, potentially explaining the noisy behaviour in that domain. Once again, Dropout-NN is noticeably more instance sparse on Puddle World, but less so on other domains. The NN representation, which has no regularization, has some instance sparsity, likely due to simply using ReLU activation. Interestingly, $\ell_1$-NN and $\ell_2$-NN actually produced less instance sparsity.

Activation overlap, introduced by \citeauthor{french1991using}\shortcite{french1991using}, reflects the amount of shared activation between any two inputs. We consider a variant of activation overlap that measures the number of shared activation between two representations, $\phivec(\xvec_1)$ and $\phivec(\xvec_2)$, for two samples, $\xvec_1$, and $\xvec_2$:
\begin{align*}
 \!\!\textrm{overlap}(\phivec(\xvec_1), &\phivec(\xvec_2)) \\
 &\!=\! \sum_j \!\mathbbm{1}[(\phivec_j(\xvec_1)>0) \land (\phivec_j(\xvec_2)>0)].
\end{align*}
We measure the activation overlap of the five chosen states, distributed across Puddle World.
If the overlap between two representations is zero, the interference would be zero. Updating
the value function with respect to one state, therefore, would not affect the other state's value.
Table \ref{tab:overlap-pw} shows the average overlap, and once again, a similar trend emerges where, SR-NN has significantly less overlap (about 8), with Dropout-NN showing the next least overlap (with about 30).

Overall, these results provide some evidence that (a) sparse representations can improve control performance in an incremental learning setting, (b) these sparse representations appear to provide locality and (c) this locality reduces interference and improves accuracy of bootstrap values in Sarsa(0).
These results are a first step, and warrant further investigation. They do nonetheless motivate that learning
%SR-NNs
sparse representations
could be a promising direction for control in reinforcement learning.
In the next section, we discuss how we actually obtain such sparse representations (SR-NN).

\section{Distributional Regularizers for Sparsity}
\label{sparsity}

In this section, we describe how to use Distributional Regularizers to learn sparse representations with neural networks.\footnote{The idea was originally introduced for neural networks with Sigmoid activations in an unpublished set of notes \cite{ng2011sparse}, and as yet has not been systematically explored. When used out-of-the-box, we found important limitations in the learned representations, including from using Sigmoid activations instead of ReLU and from using the KL to a specific distribution. We explore the idea in-depth here, to make it a practical option for learning sparse representations.} We introduce a Set Distributional Regularizer, which when paired with ReLU activations enables sparse representations to be learned, as we demonstrate in the next section. We first describe how to define Distributional Regularizers on neural networks, and then discuss the extension to a Set Distributional Regularizer, and motivation for doing so.

\begin{table}
\begin{center}
\begin{tabular}{ c | c | c | c | c }
 SR-NN & $\ell_2$-NN & $\ell_1$-NN & Dropout-NN & NN \\
 \hline
 \textbf{8.8} & 111.5 & 142.5 & 31.2 & 54.0 \\
\end{tabular}
\end{center}
 \caption{Activation overlap in Puddle World. The numbers are the average overlap over all pairs of selected states. For example, SR-NN has an average of 8.8 shared activation over all pais of 5 selected states defined in Figure \ref{fg:controlProp_reg_pw} (a). }
 \label{tab:overlap-pw}
\end{table}

The goal of using Distributional Regularizers is to encourage the distribution of each hidden node---across samples---to match a desired target distribution. In a neural network, we can view the hidden nodes, $Y_1, \ldots, Y_\wdim$, as random variables, with randomness due to random inputs. Each of these random variables $Y_j$ has a distribution $p_{\hat{\beta}_j(\theta)}$, where the parameters {\small$ \hat{\beta}_j(\theta)$} of this distribution are induced by the weights $\thetavec$ of the neural network:
\begin{equation*}
p_{\hat{\beta}_j(\theta)}(y) = \int_{s \in \States} p(s) p( \phivec_{j,\thetavec}(s) = y) d s
.
\end{equation*}
This provides a distribution over the values for the feature $\phivec_{j,\thetavec}(s)$, across inputs $s$.
A Distributional Regularizer is a KL divergence $KL(p_{\beta}||p_{\betahat_j(\thetavec)})$ that encourages this distribution to match a desired target distribution $p_\beta$ with parameter $\beta$.

Such a regularizer can be used to encourage sparsity, by selecting a target distribution that has high mass or density at zero.
Consider a Bernoulli distribution for activations, with $Y_j \in \{0,1\}$. Using a Bernoulli target distribution with $\beta = 0.1$, giving $p_\beta(Y = 1) = 0.1$, encodes a desired activation of 10\%. As another example, for continuous nonnegative $Y_j$, the target distribution can be set to an exponential distribution $p_\beta(y) = \beta^\inv \exp(-y/\beta)$, which has highest density at zero with expected value $\beta$. Setting $\beta = 0.1$ encourages the average activation to be $0.1$ and increases density on $y = 0$.

The efficacy of this regularizer, however, is tied to the parameterization of the network, which should match the target distribution. For a ReLU activation, for example, which has a range $[0,\infty)$, a Bernoulli target distribution is not appropriate. Rather, for the range $[0,\infty)$, an exponential distribution is more suitable. For a Sigmoid activation, giving values between $[0,1]$, a Bernoulli is reasonably appropriate. Additionally, the parametrization should be able to set activations to zero. The ReLU activation naturally enables zero values \citep{glorot2011deep}, by pushing activations to negative values. The addition of a Distributional Regularizer simply encourages this natural tendency, and is more likely to provide sparse representations. Activations under Sigmoid and tanh, on the other hand, are more difficult to encourage to zero, because they require highly negative input values or input values exactly equal to 0.5, respectively, to set the hidden node to zero.
For these reasons, we advocate for ReLU for the sparse layer, with an exponential target distribution.

Finally, we modify this regularizer to provide a Set Distributional Regularizer, which does not require an exact level of sparsity to be achieved. It can be difficult to choose a precise level of sparsity, making the Distributional Regularizer prone to misspecification. Rather, the actual goal is typically to obtain \emph{at least} some level of sparsity, where some nodes can be even more sparse. For this modification, we specify that the distribution should match any of a set of target distributions $Q_\beta$, giving a \emph{Set KL}: $\min_{p \in Q_\beta} KL(p || p_{\betahat_j(\thetavec)})$. Generally, this Set KL can be hard to evaluate. However, as we show below, it corresponds to a simple clipped KL-divergence for certain choices of $Q_\beta$, importantly including for exponential distributions where $Q_\beta = \{ p_{\tilde{\beta}} \ | \tilde{\beta} \le \beta\}$.

\begin{theorem}[Set KL as a Clipped-KL]
 Let $p_{\eta}$ be a one-dimensional exponential family distribution with the natural parameter $\eta$, $B = [\eta_1, \eta_2]$ be a convex set in the natural parameter space and $Q_B=\{p_{\eta}:\eta \in B\}$. Then the Set KL divergence
 \begin{equation}
 SKL(Q_B || p_\eta) \defeq \min_{p \in Q_B} KL(p || p_{\eta})
 \end{equation}
 is (a) non-negative (b) convex in $\eta$ and (c) corresponds to a simple clipped form
 \begin{equation}
 SKL(Q_B||p_\eta) = \left\{ \begin{array}{cc}KL(p_{\eta_2} || p_{\eta}) & \text{if } \eta > \eta_2\\
 KL(p_{\eta_1} || p_{\eta}) & \text{if } \eta < \eta_1\\
 0 & \text{ else }
 \end{array} \right.
 \end{equation}
 \label{thm:ckl}
\end{theorem}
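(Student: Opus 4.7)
The plan is to exploit the Bregman-divergence form of the KL for exponential families, reducing the inner minimization over $\tilde\eta \in [\eta_1,\eta_2]$ to a one-dimensional convex optimization. Recall that for a regular exponential family with log-partition $A$ and mean parameter $\mu(\eta) = \nabla A(\eta)$, one has
\begin{equation*}
KL(p_{\tilde\eta} \,\|\, p_\eta) = A(\eta) - A(\tilde\eta) - (\eta - \tilde\eta)\,\mu(\tilde\eta),
\end{equation*}
which is non-negative, strictly convex in $\eta$, and zero iff $\eta = \tilde\eta$. Part (a) then follows immediately: $SKL(Q_B\|p_\eta)$ is a minimum of non-negative quantities.

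For part (c), I would differentiate the displayed expression with respect to $\tilde\eta$, obtaining
\begin{equation*}
\frac{\partial}{\partial \tilde\eta} KL(p_{\tilde\eta} \,\|\, p_\eta) = -\,A''(\tilde\eta)\,(\eta - \tilde\eta),
\end{equation*}
using $\mu'(\tilde\eta) = A''(\tilde\eta) > 0$. The sign of this derivative is the opposite of the sign of $\eta - \tilde\eta$, so the objective is strictly decreasing in $\tilde\eta$ for $\tilde\eta < \eta$ and strictly increasing for $\tilde\eta > \eta$. Minimizing over the interval $[\eta_1,\eta_2]$ therefore yields the projection of $\eta$ onto $[\eta_1,\eta_2]$: the minimizer is $\eta_2$ if $\eta > \eta_2$, $\eta_1$ if $\eta < \eta_1$, and $\eta$ itself (giving value $0$) if $\eta \in [\eta_1,\eta_2]$. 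Substituting back gives exactly the clipped form in the theorem statement.

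For part (b), convexity of $SKL$ in $\eta$, I would argue piecewise. On each of the three regions, $SKL$ equals either $0$ or $KL(p_{\eta_i}\|p_\eta)$ for a fixed $\eta_i$, and each is convex in $\eta$ (the KL from a fixed distribution in an exponential family is convex in the second argument since it equals $A(\eta) - A(\eta_i) - (\eta-\eta_i)\mu(\eta_i)$ and $A$ is convex). Continuity at the junctions $\eta = \eta_1, \eta_2$ is clear because $KL(p_{\eta_i}\|p_{\eta_i}) = 0$. The derivative from the right at $\eta_2$ is $\mu(\eta_2) - \mu(\eta_2) = 0$, matching the derivative $0$ on $[\eta_1,\eta_2]$; symmetrically at $\eta_1$. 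Since the three convex pieces have matching derivatives at the junctions, the global derivative is non-decreasing in $\eta$, which yields convexity.

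The only subtle step is the convexity argument in (b): one must verify that gluing the three convex pieces together still yields a convex function, rather than assuming it. I expect this to be the main, but still routine, obstacle; the rest reduces to standard exponential-family calculus applied to a scalar projection problem.
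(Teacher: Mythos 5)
Your proposal is correct and follows essentially the same route as the paper's proof: both reduce $KL(p_{\tilde\eta}\,\|\,p_\eta)$ to the Bregman divergence of the convex log-partition function and compute $\frac{d}{d\tilde\eta}KL(p_{\tilde\eta}\,\|\,p_\eta) = -A''(\tilde\eta)(\eta-\tilde\eta)$ to conclude that the minimizer over $[\eta_1,\eta_2]$ is the projection of $\eta$ onto that interval, giving the clipped form. If anything, your write-up is more complete, since the paper's proof only establishes part (c) and leaves (a) and (b) unargued, whereas you supply the (correct) piecewise-convexity-with-matching-one-sided-derivatives argument for convexity in $\eta$.
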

\begin{proof}
 For exponential families, the KL divergence correspond to a Bregman divergence \cite{banerjee2005clustering}:
 \begin{equation*}
 KL(p_{\eta_1}||p_{\eta}) = D_{F}(\eta || \eta_1)
 \end{equation*}
 for a convex potential function $F$ that depends on the exponential family.
 Hence, we have
 \begin{equation*}
 SKL(Q_B ||p_{\eta}) = \arg\min_{\tilde{\eta} \in B} D_F(\eta || \tilde{\eta})
 \end{equation*}
 If $\eta \in B$, this minimum over Bregman divergences is clearly zero.
 If $\eta < \eta_1$ and $\eta > \eta_2$, we have to consider the minimization. The Bregman divergence
 is not necessarily convex in the second argument. Instead, we can rely on convexity of the set $B$.
 Taking the derivative of $D_F(\eta || \tilde{\eta})$ wrt $\tilde{\eta}$, we get
 \begin{align*}
 \frac{d}{d \tilde{\eta}} D_F(\eta || \tilde{\eta})
 &= \frac{d}{d \tilde{\eta}} \left[ F(\eta) - F(\tilde{\eta}) - (\eta - \tilde{\eta}) \frac{d}{d \tilde{\eta}} F(\tilde{\eta})\right]\\
 &= - \frac{d}{d \tilde{\eta}} F(\tilde{\eta}) + \frac{d}{d \tilde{\eta}} F(\tilde{\eta}) - (\eta - \tilde{\eta}) \frac{d^2}{d \tilde{\eta}^2} F(\tilde{\eta})\\
 &= -\frac{d^2}{d \tilde{\eta}^2} F(\tilde{\eta}) (\eta - \tilde{\eta})
 \end{align*}
 Now because $F$ is convex, $-\frac{d^2}{d \tilde{\eta}^2} F(\tilde{\eta})$ is always negative. The derivative, then, is negative when $\tilde{\eta} < \eta$, indicating $\tilde{\eta}$ should be increased to decrease $D_F(\eta || \tilde{\eta})$. Similarly, when $\tilde{\eta} > \eta$, the derivative is positive, indicating $\tilde{\eta}$ should be decreased to decrease $D_F(\eta || \tilde{\eta})$. This derivative, then, points $\tilde{\eta}$ to the boundaries when $\eta \notin B$, respectively to the boundary points closest to $\eta$.
\end{proof}
\begin{corollary}[SKL for Exponential Distributions]\label{cor_skl}
  %\raksha I think the natural parameter is $\eta = -\beta^\inv$. The argument still holds, except it's a little wierd because $\eta$ would need to be in the set $(-\infty, -\beta]$ (which kinda implicitly assumes $1.0/0 = \infty$). We don't have to talk about it though -- we can just fix the natural parameter definition below and we should still be good.
 For $p_\beta$ an exponential distribution, with natural parameter $\eta = - \beta^\inv$, and $B = (0, \beta]$, then
 \begin{equation}
 SKL(Q_B || p_{\betahat}) \!= \!\left\{ \begin{array}{cc} \log\betahat + \frac{\beta}{\betahat} - \log\beta - 1& \text{if } \betahat > \beta \\
 0 & \text{ else }
 \end{array} \right. \!\!\!\!\!\!
 \end{equation}
\end{corollary}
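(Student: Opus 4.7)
The plan is to reduce Corollary \ref{cor_skl} to a direct application of Theorem \ref{thm:ckl}, together with one explicit KL computation for the exponential family. First I would write $p_\beta(y) = \beta^\inv \exp(-y/\beta)$ in canonical exponential-family form, identifying the natural parameter $\eta = -\beta^\inv$, sufficient statistic $y$, and log-partition $F(\eta) = -\log(-\eta)$ (strictly convex on $(-\infty,0)$). The reparametrization $\beta \mapsto -\beta^\inv$ is strictly increasing on $(0,\infty)$, so the set $B = (0,\beta]$ in $\beta$-space pulls back to the convex set $\tilde B = (-\infty,\, -\beta^\inv]$ in natural-parameter space. This is exactly the form required by Theorem \ref{thm:ckl}, with boundary points $\eta_1 = -\infty$ and $\eta_2 = -\beta^\inv$.

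Next I would apply the theorem to the natural parameter $\hat\eta = -\betahat^\inv$ associated with $p_{\betahat}$. Since no natural parameter of an exponential distribution is strictly less than $-\infty$, the lower-boundary branch of the theorem is vacuous. The interior/else case $\hat\eta \in \tilde B$ is equivalent to $-\betahat^\inv \le -\beta^\inv$, i.e.\ $\betahat \le \beta$, and gives $SKL = 0$. The upper case $\hat\eta > \eta_2$ is equivalent to $\betahat > \beta$ and gives $SKL(Q_B || p_{\betahat}) = KL(p_\beta || p_{\betahat})$.

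All that remains is to evaluate $KL(p_\beta || p_{\betahat})$ in closed form. Using $\log p_\beta(y) - \log p_{\betahat}(y) = \log(\betahat/\beta) - y/\beta + y/\betahat$ together with $\mathbb{E}_{p_\beta}[y] = \beta$, the KL integrates to $\log\betahat - \log\beta + \beta/\betahat - 1$, which is exactly the formula stated in the corollary.

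There is no substantive obstacle here; the only care point is tracking the sign flip from the map $\beta \mapsto -\beta^\inv$, which is what sends the ``$\betahat \le \beta$'' regime (representation already at least as sparse as desired) to the trivial zero branch of Theorem \ref{thm:ckl}, and the ``$\betahat > \beta$'' regime (not sparse enough) to the nontrivial penalty branch. Convexity of $\tilde B$ and vacuity of the lower-boundary case are immediate, so Theorem \ref{thm:ckl} transfers verbatim.
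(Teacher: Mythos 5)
Your proposal is correct and is exactly the intended derivation: the paper states Corollary \ref{cor_skl} as an immediate specialization of Theorem \ref{thm:ckl} without a separate proof, and your steps (canonical form with $\eta=-\beta^\inv$ and $F(\eta)=-\log(-\eta)$, the monotone pullback of $B=(0,\beta]$ to $(-\infty,-\beta^\inv]$, vacuity of the lower branch, and the closed-form $KL(p_\beta\|p_{\betahat})=\log\betahat-\log\beta+\beta/\betahat-1$ using $\mathbb{E}_{p_\beta}[y]=\beta$) fill in precisely the omitted details. The sign-flip bookkeeping identifying $\betahat>\beta$ with the penalized branch is handled correctly.
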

We use the SKL in Corollary \ref{cor_skl}, to encode a sparsity level of at least $\beta$---rather than exactly $\beta$---for the last layer in a two-layer neural network with ReLU activations.
This regularizer was used
to encourage sparse activations
for SR-NN in the preceding section.
We include pseudocode for optimizing the regularized objective with the SKL, in Algorithm \ref{alg:opt} in the Appendix. % \ref{alg:opt}

\section{Evaluation of Distributional Regularizers}

In this section, we investigate the efficacy of Distributional Regularizers for obtaining sparsity. There are a variety of possible choices with Distributional Regularizers, including activation function and corresponding target distribution and using a KL versus a Set KL.
In this section, we investigate some of these combinations, particularly focusing on the difference in sparsity and performance when using (a) KL versus SKL; (b) Sigmoid (with a Bernoulli target distribution) versus ReLU (with an Exponential target distribution); and
(c) previous strategies to obtain sparse representations
 versus the proposed variant of the Distributional Regularizer.

In the first set of experiments, we compare the instance sparsity of KL to Set KL, with ReLU activations and Exponential Distributions (ReLU+KL and ReLU+SKL).
Figure \ref{fg:hist-dist} shows the instance sparsity for these, and for the NN without regularization.
Interestingly, ReLU+KL actually reduces sparsity in several domains, because the optimization encouraging an exact level of sparsity is quite finicky. ReLU+SKL, on the other hand, significantly improves instance sparsity over
the NN.
This instance sparsity again translates into control performance, where ReLU+KL does noticeably worse than ReLU+SKL across the four domains in Figure \ref{fg:exp2-dist}. Despite the poor instance sparsity, ReLU+KL does actually seem to provide some useful regularity, that does allow some learning across all four domains. This contrasts the previous regularization strategies, $\ell_2$, $\ell_1$ and Dropout, which all failed to learn on at least one domain, particularly Catcher.

\begin{figure}[t]
 \centering
 \includegraphics[width=\linewidth]{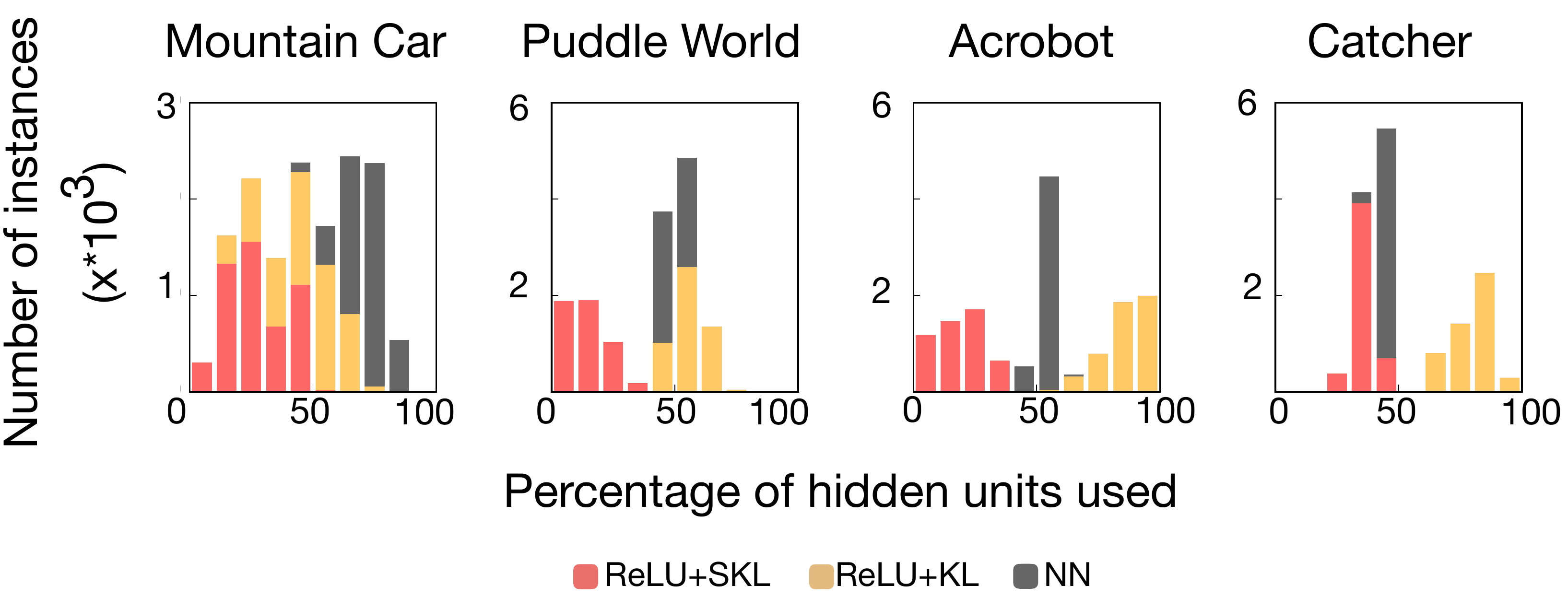}
 \caption{Instance sparsity as evaluated on a batch of test data comparing ReLU+KL and ReLU+SKL to NN. While ReLU+KL can make representations denser than just NN, ReLU+SKL always results in sparser representations.
 }
 \label{fg:hist-dist}
\end{figure}

In the next set of experiments, we compare Sigmoid (with a Bernoulli target distribution) versus ReLU (with an Exponential target distribution). We included both KL and Set KL, giving the combinations ReLU+KL, ReLU+SKL, SIG+KL, and SIG+SKL. We expect Sigmoid with Bernoulli to perform significantly worse---in terms of sparsity levels, locality and performance---because the Sigmoid activation makes it difficult to truly get sparse representations. This hypothesis is validated in the learning curves in Figure \ref{fg:exp2-dist} and the heatmaps for Puddle World in Figure \ref{fg:heatmap-dist}. SIG+KL and SIG+SKL perform poorly across domains, even in Puddle World, where they achieved their best performance. Unlike ReLU with Exponential, here the Set KL seems to provide little benefit. The heatmaps in Figure \ref{fg:heatmap-dist} show that both versions, SIG+KL and SIG+SKL,
cover large porions of the space, and do not have local activations for hidden nodes. In fact, SIG+KL and SIG+SKL use all the hidden nodes for all the samples across domains, resulting in no instance sparsity.

\begin{figure}[ht]
 \centering
 \includegraphics[width=0.9\linewidth,scale=0.5]{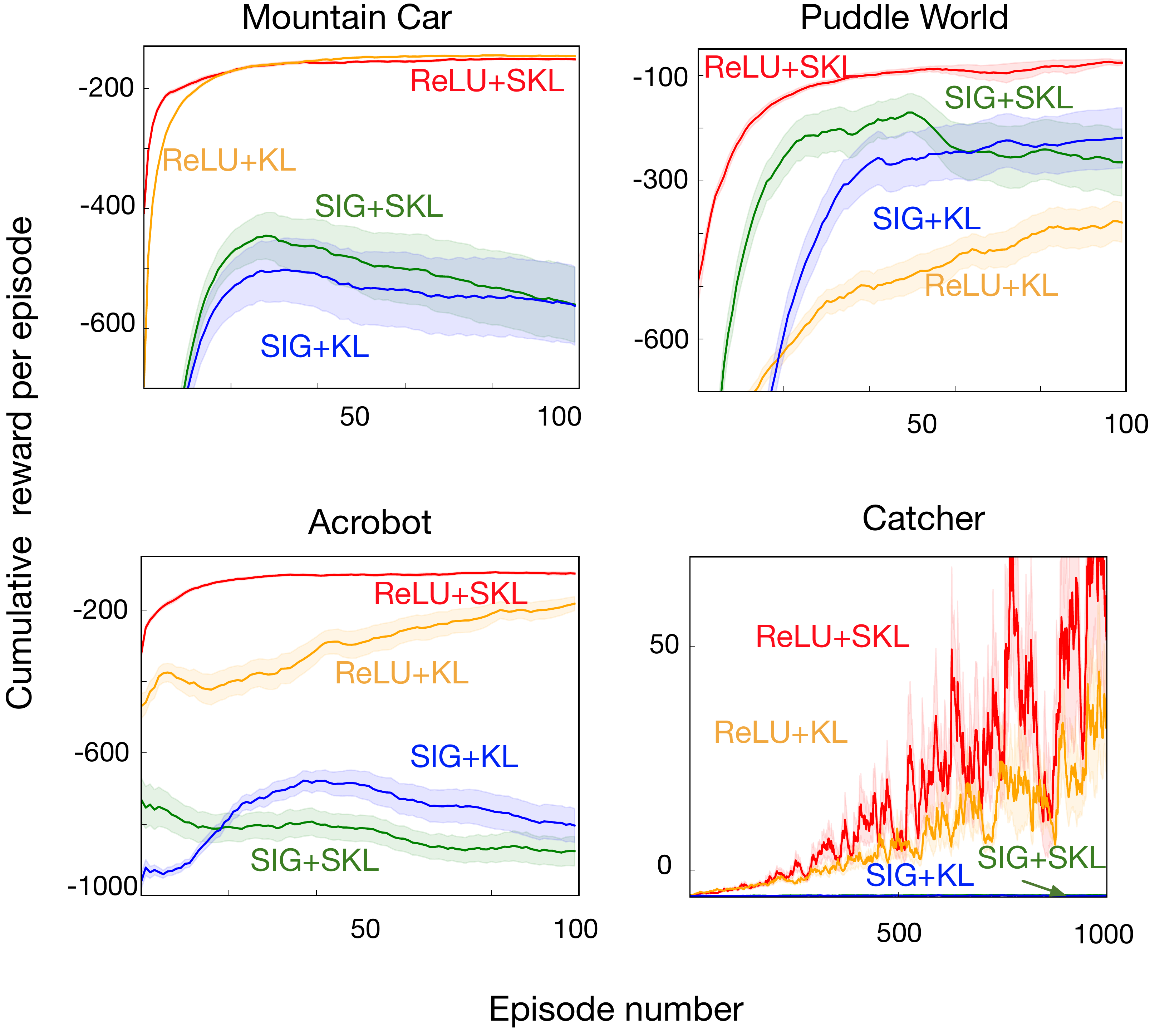}
 \caption{Learning curves for Sarsa(0) with different Distributional Regularizers.}
 \label{fg:exp2-dist}
\end{figure}

\begin{figure}[ht]
 \centering
 \includegraphics[width=0.9\linewidth]{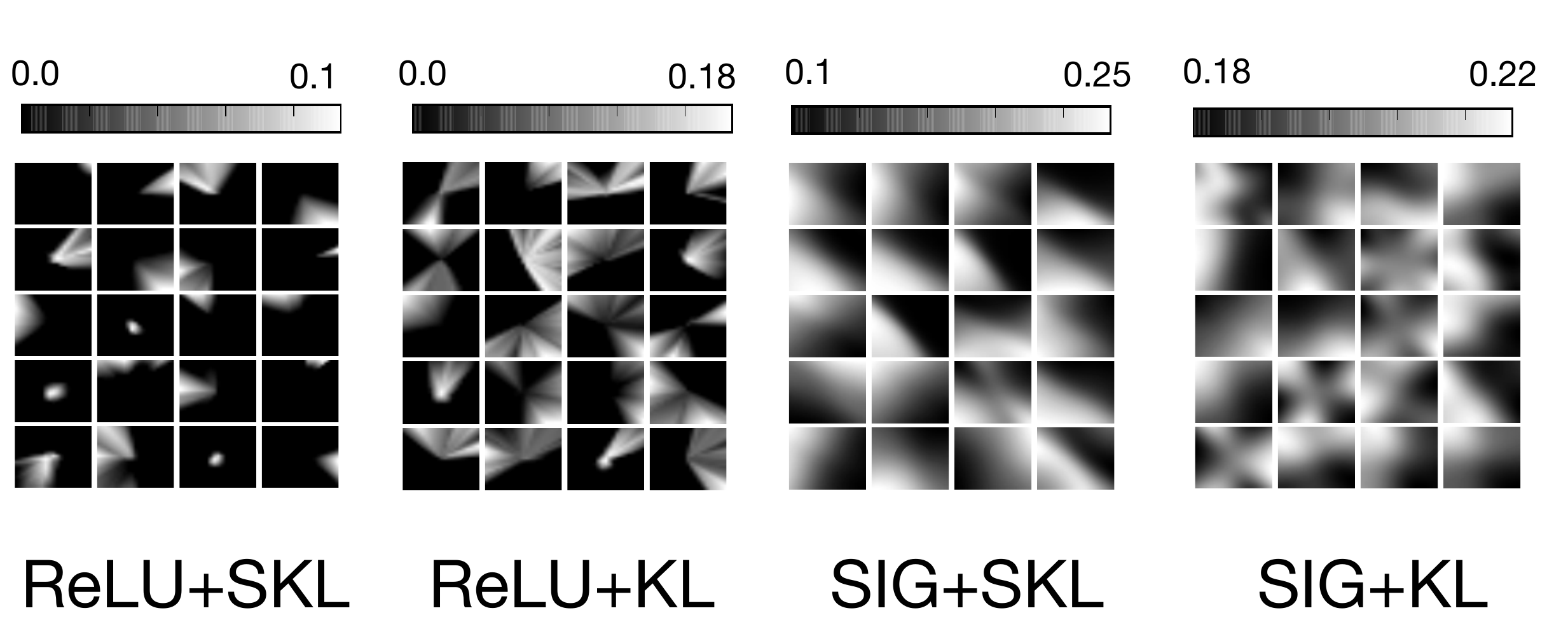}
 \caption{Heatmaps of activations with different Distributional Regularizers in Puddle World.}
 \label{fg:heatmap-dist}
\end{figure}

\begin{figure}[ht]
 \centering
 \includegraphics[width=0.9\linewidth,scale=0.5]{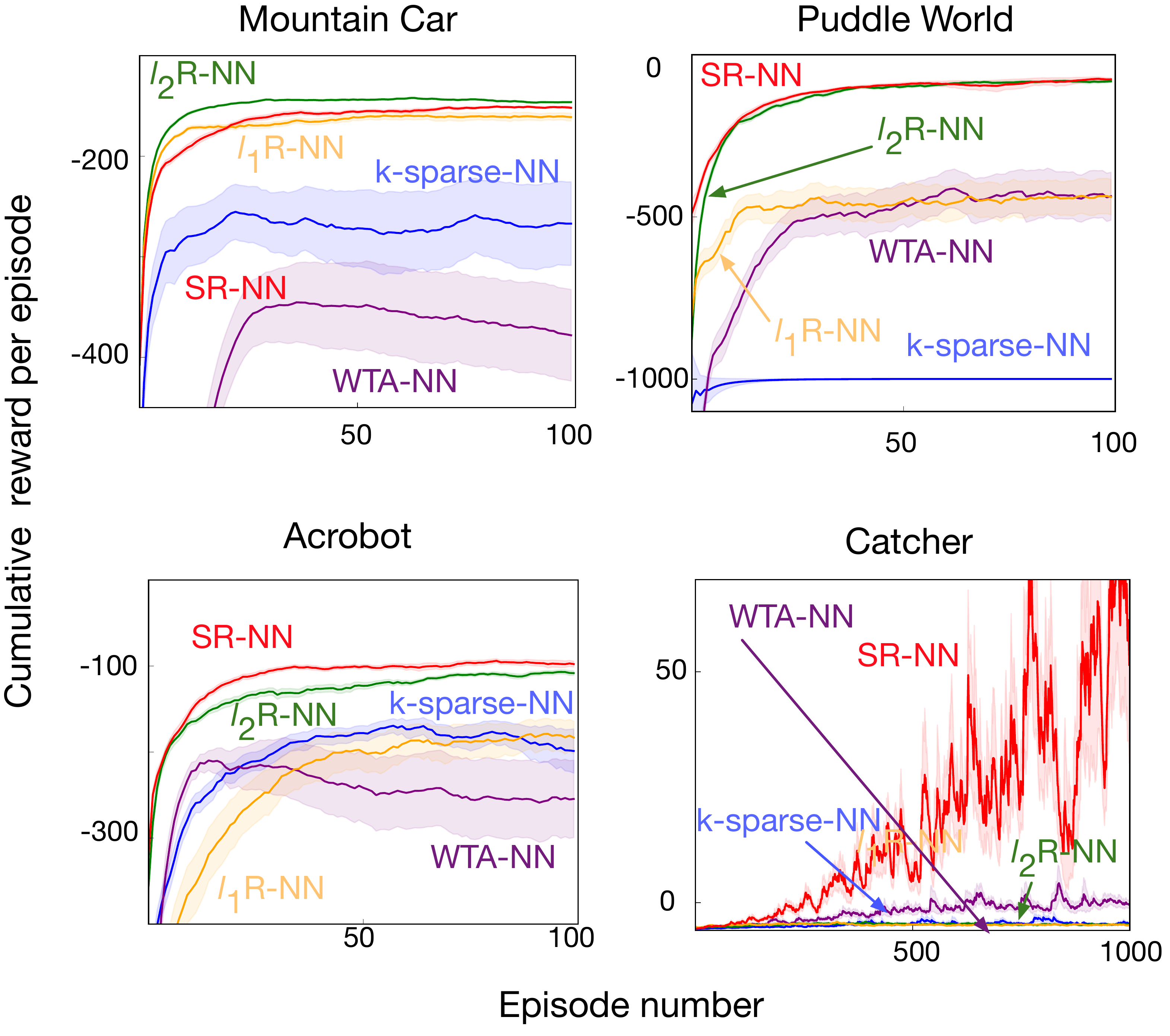}
 \caption{Learning curves for Sarsa(0) comparing SR-NN to previous proposed sparse representations learning strategies.}
 \label{fg:exp2-k-sparse}
\end{figure}

\begin{figure}[ht]
 \centering
 \includegraphics[width=0.9\linewidth]{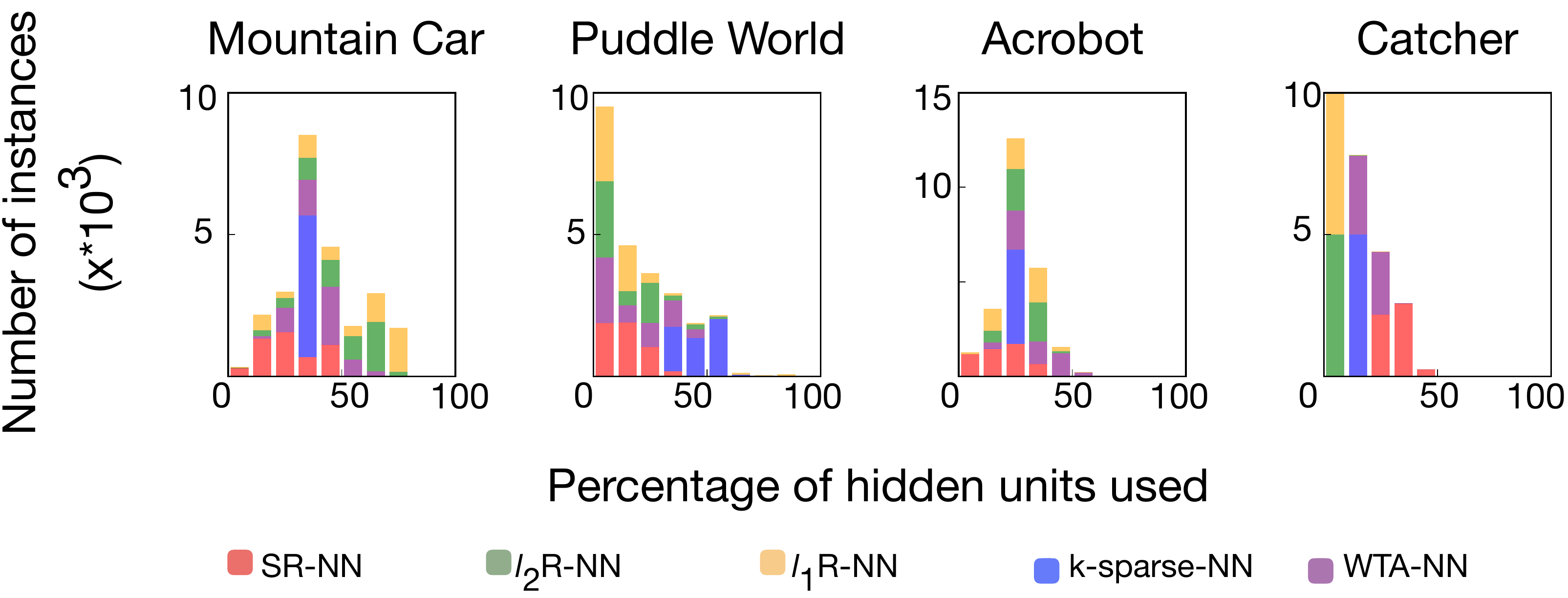}
 \caption{Instance sparsity comparing SR-NN to previous proposed sparse representations learning strategies.
 }
 \label{fg:hist-sparse}
\end{figure}

Next, we compare to previously proposed strategies for learning sparse representations with neural networks.
These include using $\ell_1$ and $\ell_2$ regularization on the activation (denoted by $\ell_1$R-NN and $\ell_2$R-NN respectively); k-sparse NNs, where all but the top $k$ activations are zeroed \cite{makhzani2013k} (k-sparse-NN); and Winner-Take-All NNs that keep the top k\% of the activations per node across instances, to promote sparse activations of nodes over time \cite{makhzani2015winner} (WTA-NN).\footnote{Both k-sparse-NNs and WTA-NNs were introduced for auto-encoders, though the idea can be applied more generally to NNs. We additionally tested these methods with autoencoders, but performance was significantly worse.}

We include learning curves and instance sparsity for these methods, for a ReLU activation, in Figures \ref{fg:exp2-k-sparse} and \ref{fg:hist-sparse}. Results for the Sigmoid activation are included in the Appendix. Neither WTA-NN nor k-sparse-NN are effective. We found the k-sparse-NN was prone to dead units, and often truncates non-negligible value.
Surprisingly, $\ell_2$R-NN performs comparably to SR-NN in all domains but Catcher, whereas $\ell_1$R-NN is effective only during early learning in Mountain Car. From the instance sparsity plots in Catcher, we see that $\ell_1$R-NN and $\ell_2$R-NN produce highly sparse (2\%-3\% instance sparsity), potentially explaining its poor performance. While similar instance sparsity was effective in Puddle World, this is unlikely to be true in general. This was with considerable parameter optimization for the regularization parameter.

\section{Conclusion}

In this work, we investigate using and learning sparse representations with neural networks for control in reinforcement learning. We show that sparse representations can significantly improve control performance when used in an incremental learning setting, and provide some evidence that this is because the locality of the representation reduces catastrophic interference which would otherwise overwrite bootstrap values.
We formalize Distributional Regularizers, with a practically important extension to a Set KL, for learning
a Sparse Representation Neural Network (SR-NN).
We provide an empirical investigation into the sparsity properties and control performance under different Distributional Regularizers, as well as compared to other algorithms to obtain sparse representations with neural networks. We conclude that SR-NN performs consistently well across domains, with the next best method---which only fails in one domain---being a simple methods that uses an $\ell_2$ regularizer on activations.

This work highlights an important phenomenon that arises in control, beyond the typical issues with catastrophic interference. Interference is typically considered for sequential multi-task learning, where previous functions are forgotten by training on a new task. Interference could occur even in a single-task setting, if an agent remains in a particular area of the space for a long time. In reinforcement learning, however, this problem is magnified by the fact that the agent uses its own estimates as targets. If estimates change incorrectly due to interference, there could be a cascading effect. This work provides some first empirical steps, in a carefully controlled set of experiments, to identify that this could be an issue, and that sparse representations could be a promising direction to alleviate the problem. We hope for this work to spur further empirical investigation into how widespread this issue is, and further algorithmic development into learning sparse representations for reinforcement learning.

\bibliography{paper.bib}

\begin{thebibliography}{}

\bibitem[\protect\citeauthoryear{Ahmad and Hawkins}{2015}]{ahmad2015properties}
Ahmad, S., and Hawkins, J.
\newblock 2015.
\newblock {Properties of Sparse Distributed Representations and their
  Application to Hierarchical Temporal Memory}.

\bibitem[\protect\citeauthoryear{Arpit \bgroup et al\mbox.\egroup
  }{2015}]{arpit2015regularized}
Arpit, D.; Zhou, Y.; Ngo, H.; and Govindaraju, V.
\newblock 2015.
\newblock Why regularized auto-encoders learn sparse representation?
\newblock In {\em International Conference on Machine Learning}.

\bibitem[\protect\citeauthoryear{Banerjee \bgroup et al\mbox.\egroup
  }{2005}]{banerjee2005clustering}
Banerjee, A.; Merugu, S.; Dhillon, I.~S.; and Ghosh, J.
\newblock 2005.
\newblock Clustering with bregman divergences.
\newblock {\em Journal of Machine Learning Research} 6(Oct):1705--1749.

\bibitem[\protect\citeauthoryear{Banino \bgroup et al\mbox.\egroup
  }{2018}]{banino2018vector}
Banino, A.; Barry, C.; Uria, B.; Blundell, C.; Lillicrap, T.; Mirowski, P.;
  Pritzel, A.; Chadwick, M.~J.; Degris, T.; Modayil, J.; et~al.
\newblock 2018.
\newblock Vector-based navigation using grid-like representations in artificial
  agents.
\newblock {\em Nature}.

\bibitem[\protect\citeauthoryear{Bengio, Courville, and
  Vincent}{2013}]{bengio2013representation}
Bengio, Y.; Courville, A.; and Vincent, P.
\newblock 2013.
\newblock Representation learning: A review and new perspectives.
\newblock {\em IEEE Transactions on Pattern Analysis and Machine Intelligence}.

\bibitem[\protect\citeauthoryear{Bengio}{2009}]{bengio2009learning}
Bengio, Y.
\newblock 2009.
\newblock {Learning deep architectures for AI}.
\newblock {\em Foundations and Trends in Machine Learning}.

\bibitem[\protect\citeauthoryear{Cover}{1965}]{cover1965geometrical}
Cover, T.~M.
\newblock 1965.
\newblock {Geometrical and Statistical Properties of Systems of Linear
  Inequalities with Applications in Pattern Recognition.}
\newblock {\em IEEE Trans. Electronic Computers}.

\bibitem[\protect\citeauthoryear{F{\"o}ldi{\'a}k}{1990}]{foldiak1990forming}
F{\"o}ldi{\'a}k, P.
\newblock 1990.
\newblock {Forming sparse representations by local anti-Hebbian learning}.
\newblock {\em Biological Cybernetics}.

\bibitem[\protect\citeauthoryear{French}{1991}]{french1991using}
French, R.~M.
\newblock 1991.
\newblock Using semi-distributed representations to overcome catastrophic
  forgetting in connectionist networks.
\newblock In {\em Annual Cognitive Science Society Conference}.

\bibitem[\protect\citeauthoryear{Glorot, Bordes, and
  Bengio}{2011}]{glorot2011deep}
Glorot, X.; Bordes, A.; and Bengio, Y.
\newblock 2011.
\newblock {Deep sparse rectifier neural networks}.
\newblock In {\em International Conference on Artificial Intelligence and
  Statistics}.

\bibitem[\protect\citeauthoryear{Goodfellow \bgroup et al\mbox.\egroup
  }{2009}]{goodfellow2009measuring}
Goodfellow, I.; Lee, H.; Le, Q.~V.; Saxe, A.; and Ng, A.~Y.
\newblock 2009.
\newblock {Measuring invariances in deep networks}.
\newblock In {\em Advances in Neural Information Processing Systems}.

\bibitem[\protect\citeauthoryear{He \bgroup et al\mbox.\egroup
  }{2015}]{he2015delving}
He, K.; Zhang, X.; Ren, S.; and Sun, J.
\newblock 2015.
\newblock Delving deep into rectifiers: Surpassing human-level performance on
  imagenet classification.
\newblock In {\em IEEE International Conference on Computer Vision}.

\bibitem[\protect\citeauthoryear{Hinton, Srivastava, and
  Swersky}{2012}]{hinton2012neural}
Hinton, G.; Srivastava, N.; and Swersky, K.
\newblock 2012.
\newblock Neural networks for machine learning lecture 6a overview of
  mini-batch gradient descent.

\bibitem[\protect\citeauthoryear{Kanerva}{1988}]{kanerva1988sparse}
Kanerva, P.
\newblock 1988.
\newblock {\em {Sparse Distributed Memory}}.
\newblock MIT Press.

\bibitem[\protect\citeauthoryear{Kingma and Ba}{2014}]{kingma2014adam}
Kingma, D.~P., and Ba, J.
\newblock 2014.
\newblock Adam: A method for stochastic optimization.
\newblock {\em arXiv:1412.6980}.

\bibitem[\protect\citeauthoryear{Le, Kumaraswamy, and
  White}{2017}]{le2017learning}
Le, L.; Kumaraswamy, R.; and White, M.
\newblock 2017.
\newblock {Learning sparse representations in reinforcement learning with
  sparse coding}.
\newblock {\em arXiv:1707.08316}.

\bibitem[\protect\citeauthoryear{Lee \bgroup et al\mbox.\egroup
  }{2008}]{lee2008sparse}
Lee, H.; Ekanadham, C.; information, A. N. A. i.~n.; and {2008}.
\newblock 2008.
\newblock {Sparse deep belief net model for visual area V2}.
\newblock In {\em Advances in Neural Information Processing Systems}.

\bibitem[\protect\citeauthoryear{Lemme, Reinhart, and
  Steil}{2012}]{lemme2012online}
Lemme, A.; Reinhart, R.~F.; and Steil, J.~J.
\newblock 2012.
\newblock {Online learning and generalization of parts-based image
  representations by non-negative sparse autoencoders}.
\newblock {\em Neural Networks}.

\bibitem[\protect\citeauthoryear{Mairal \bgroup et al\mbox.\egroup
  }{2009}]{mairal2009supervised}
Mairal, J.; Bach, F.; Ponce, J.; Sapiro, G.; and Zisserman, A.
\newblock 2009.
\newblock {Supervised dictionary learning}.
\newblock In {\em Advances in Neural Information Processing Systems}.

\bibitem[\protect\citeauthoryear{Mairal \bgroup et al\mbox.\egroup
  }{2010}]{mairal2010online}
Mairal, J.; Bach, F.; Ponce, J.; and Sapiro, G.
\newblock 2010.
\newblock {Online learning for matrix factorization and sparse coding}.
\newblock {\em Journal of Machine Learning Research}.

\bibitem[\protect\citeauthoryear{Makhzani and Frey}{2013}]{makhzani2013k}
Makhzani, A., and Frey, B.
\newblock 2013.
\newblock {k-sparse autoencoders}.
\newblock {\em arXiv preprint arXiv:1312.5663}.

\bibitem[\protect\citeauthoryear{Makhzani and Frey}{2015}]{makhzani2015winner}
Makhzani, A., and Frey, B.
\newblock 2015.
\newblock {Winner-take-all autoencoders}.
\newblock In {\em Adv. in Neural Information Processing Systems}.

\bibitem[\protect\citeauthoryear{McCloskey and
  Cohen}{1989}]{mccloskey1989catastrophic}
McCloskey, M., and Cohen, N.~J.
\newblock 1989.
\newblock {Catastrophic Interference in Connectionist Networks: The Sequential
  Learning Problem}.
\newblock {\em Psychology of Learning and Motivation}.

\bibitem[\protect\citeauthoryear{McCulloch and
  Pitts}{1943}]{mcculloch1943logical}
McCulloch, W.~S., and Pitts, W.
\newblock 1943.
\newblock {A logical calculus of the ideas immanent in nervous activity}.
\newblock {\em The Bulletin of Mathematical Biophysics}.

\bibitem[\protect\citeauthoryear{Ng}{2011}]{ng2011sparse}
Ng, A.
\newblock 2011.
\newblock {Sparse autoencoder}.
\newblock {\em CS294A Lecture notes}.

\bibitem[\protect\citeauthoryear{Olshausen and
  Field}{1997}]{olshausen1997sparse}
Olshausen, B.~A., and Field, D.~J.
\newblock 1997.
\newblock {Sparse coding with an overcomplete basis set: A strategy employed by
  V1?}
\newblock {\em Vision Research}.

\bibitem[\protect\citeauthoryear{Olshausen}{2002}]{olshausen2002sparse}
Olshausen, B.~A.
\newblock 2002.
\newblock {Sparse Codes and Spikes}.
\newblock In {\em Probabilistic Models of the Brain}.

\bibitem[\protect\citeauthoryear{Quian~Quiroga and
  Kreiman}{2010}]{quian2010measuring}
Quian~Quiroga, R., and Kreiman, G.
\newblock 2010.
\newblock Measuring sparseness in the brain: Comment on bowers (2009).

\bibitem[\protect\citeauthoryear{Ranzato \bgroup et al\mbox.\egroup
  }{2006}]{ranzato2006efficient}
Ranzato, M.; Poultney, C.~S.; Chopra, S.; and LeCun, Y.
\newblock 2006.
\newblock {Efficient Learning of Sparse Representations with an Energy-Based
  Model.}
\newblock In {\em Adv. in Neural Info. Process. Sys.}

\bibitem[\protect\citeauthoryear{Ranzato, Boureau, and
  LeCun}{2007}]{ranzato2007sparse}
Ranzato, M.; Boureau, Y.-L.; and LeCun, Y.
\newblock 2007.
\newblock {Sparse Feature Learning for Deep Belief Networks.}
\newblock In {\em Advances in Neural Information Processing Systems}.

\bibitem[\protect\citeauthoryear{Ratitch and Precup}{2004}]{ratitch2004sparse}
Ratitch, B., and Precup, D.
\newblock 2004.
\newblock {Sparse distributed memories for on-line value-based reinforcement
  learning}.
\newblock In {\em Machine Learning: ECML PKDD}.

\bibitem[\protect\citeauthoryear{Riedmiller}{2005}]{riedmiller2005neural}
Riedmiller, M.
\newblock 2005.
\newblock {Neural fitted Q iteration--first experiences with a data efficient
  neural reinforcement learning method}.
\newblock In {\em European Conference on Machine Learning}.

\bibitem[\protect\citeauthoryear{Rifai \bgroup et al\mbox.\egroup
  }{2011}]{rifai2011contractive}
Rifai, S.; Vincent, P.; Muller, X.; Glorot, X.; and Bengio, Y.
\newblock 2011.
\newblock {Contractive auto-encoders: Explicit invariance during feature
  extraction}.
\newblock In {\em Inter. Conf. on Machine Learning}.

\bibitem[\protect\citeauthoryear{Srivastava \bgroup et al\mbox.\egroup
  }{2014}]{srivastava2014dropout}
Srivastava, N.; Hinton, G.; Krizhevsky, A.; Sutskever, I.; and Salakhutdinov,
  R.
\newblock 2014.
\newblock Dropout: a simple way to prevent neural networks from overfitting.
\newblock {\em The Journal of Machine Learning Research} 15(1):1929--1958.

\bibitem[\protect\citeauthoryear{Sutton and
  Barto}{1998}]{sutton1998reinforcement}
Sutton, R.~S., and Barto, A.~G.
\newblock 1998.
\newblock {\em {Reinforcement learning: An introduction}}.
\newblock MIT press Cambridge.

\bibitem[\protect\citeauthoryear{Sutton}{1988}]{sutton1988learning}
Sutton, R.~S.
\newblock 1988.
\newblock Learning to predict by the methods of temporal differences.
\newblock {\em Machine learning} 3(1):9--44.

\bibitem[\protect\citeauthoryear{Sutton}{1996}]{sutton1996generalization}
Sutton, R.~S.
\newblock 1996.
\newblock {Generalization in reinforcement learning: Successful examples using
  sparse coarse coding}.
\newblock In {\em Advances in Neural Information Processing Systems}.

\bibitem[\protect\citeauthoryear{Teh \bgroup et al\mbox.\egroup
  }{2003}]{teh2003energy}
Teh, Y.~W.; Welling, M.; Osindero, S.; and Hinton, G.~E.
\newblock 2003.
\newblock {Energy-Based Models for Sparse Overcomplete Representations.}
\newblock {\em Journal of Machine Learning Research}.

\bibitem[\protect\citeauthoryear{Triesch}{2005}]{triesch2005agradient}
Triesch, J.
\newblock 2005.
\newblock {A Gradient Rule for the Plasticity of a Neuron{\textquoteright}s
  Intrinsic Excitability}.
\newblock In {\em ICANN}.

\bibitem[\protect\citeauthoryear{White}{2017}]{white2017unifying}
White, M.
\newblock 2017.
\newblock {Unifying Task Specification in Reinforcement Learning}.
\newblock In {\em Inter. Conf. on Machine Learning}.

\end{thebibliography}
{\small
\bibliographystyle{aaai}
}

\newpage
\appendix
\section{Additional algorithmic details}

\begin{algorithm}
\caption{Optimizing the regularized objective}
\label{alg:opt}
\hspace*{\algorithmicindent}
\begin{algorithmic}[1]
    \State Initialize neural networks weights based on He initialization \cite{he2015delving}: for each layer $l$ and each element $ij$ of the weight matrix  $\Wmat^{(l)}_{ij}\sim\calN(0, \frac{2}{n_l})$ and $\bvec^{(l)} = \boldsymbol{0}$  where $n_l$ the number of input nodes for layer $l$.
    \While{not converge to a minimum}
        \State Draw $m$ i.i.d. samples $\{y_1, ..., y_m\}$ from the true data distribution
        \State For $j=1,...,k$, compute $\betahat_j=\sum_{i=1}^{m} y_{ij} / m$ and the gradient:
        \[\frac{\partial KL(p_{\beta}||p_{\betahat_j})}{\partial \betahat_j}  = (\frac{1}{\betahat_j} - \frac{\beta}{\betahat_j^2})\mathbbm{1}[\betahat_j>\beta]\]
        \State Update each weight $\thetavec\in\{\forall l, \Wmat^{(l)}, \bvec^{(l)}\}$ with the gradient:
        \[\frac{\partial J(\thetavec)}{\partial \thetavec} + \lambda_{KL} \sum_{j=1}^{k} \frac{\partial KL(p_{\beta}||p_{\betahat_j})}{\partial \betahat_j}  \frac{\partial \betahat_j}{\partial \thetavec}\]
    \EndWhile
\end{algorithmic}
\end{algorithm}

In general, we advocate for learning the representation incrementally, for the task faced by the agent. However, for our experiments, we learned the representations first to remove confounding factors. We detail that learning regime here.

The problem of learning a good representation $\phivec_\thetavec(s,a)$ in the case of finite actions can be transformed to learning a good representation of the form $\phivec_\thetavec(s)$, and using that to represent the action-value function from Equation (\ref{eq:q_sa_1}) as:
\begin{equation}
\hat{Q}_{\weights, \thetavec}(s, a) \defeq \phivec_\thetavec(s)^\top \weights_a
\label{eq:q_sa_2}
\end{equation}
Here, $\phivec_\thetavec(s)$ is the linear representation of the state $s$, which is used in conjunction with the linear predictor $\weights_a$ to estimate action-values for action $a$ across the state space. Under a given policy, like the action-values $Q^\pi(s,a)$, corresponding state-values, $V^\pi(s)$, are defined as:
\begin{align*}
    V^\pi(s) &\defeq \mathbb{E}[G_t | S_t = s] \\
    &\text{ where, } G_t = R_{t+1} + \gamma_{t+1} G_{t+1}
\end{align*}
An easy objective to train connectionist networks with simple backpropagation is the Mean Squared Temporal Difference Error (MSTDE) \cite{sutton1988learning}. For a given policy, the MSTDE is defined as:
\begin{align}
  \label{eq:mstde}
  \sum_{s \in \States} &\mathbf{d}(s) \mathbb{E}[\delta_t^2 | S_t=s]\\
  \text{where, } \delta_t \defeq R_{t+1} &+ \gamma_{t+1} \phivec_\thetavec(S_{t+1})^\top \weights_v - \phivec_\thetavec(S_t)^\top \weights_v \nonumber
\end{align}
Here, $\mathbf{d}$ denotes the stationary distribution over the states induced by the given policy, and $\thetavec$ and $\weights_v$ are parameters that can be estimated with stochastic gradient descent. Therefore, given experience generated by a policy that explores sufficiently in an environment, a strong function approximator (a dense neural network) can be trained to estimate useful features, $\phivec_\thetavec(s)$. These features can then be used for estimating action-values in on-policy control for learning the (close-to) optimal behaviour policy in the environment.

\begin{figure}
  \centering
  \includegraphics[width=\linewidth]{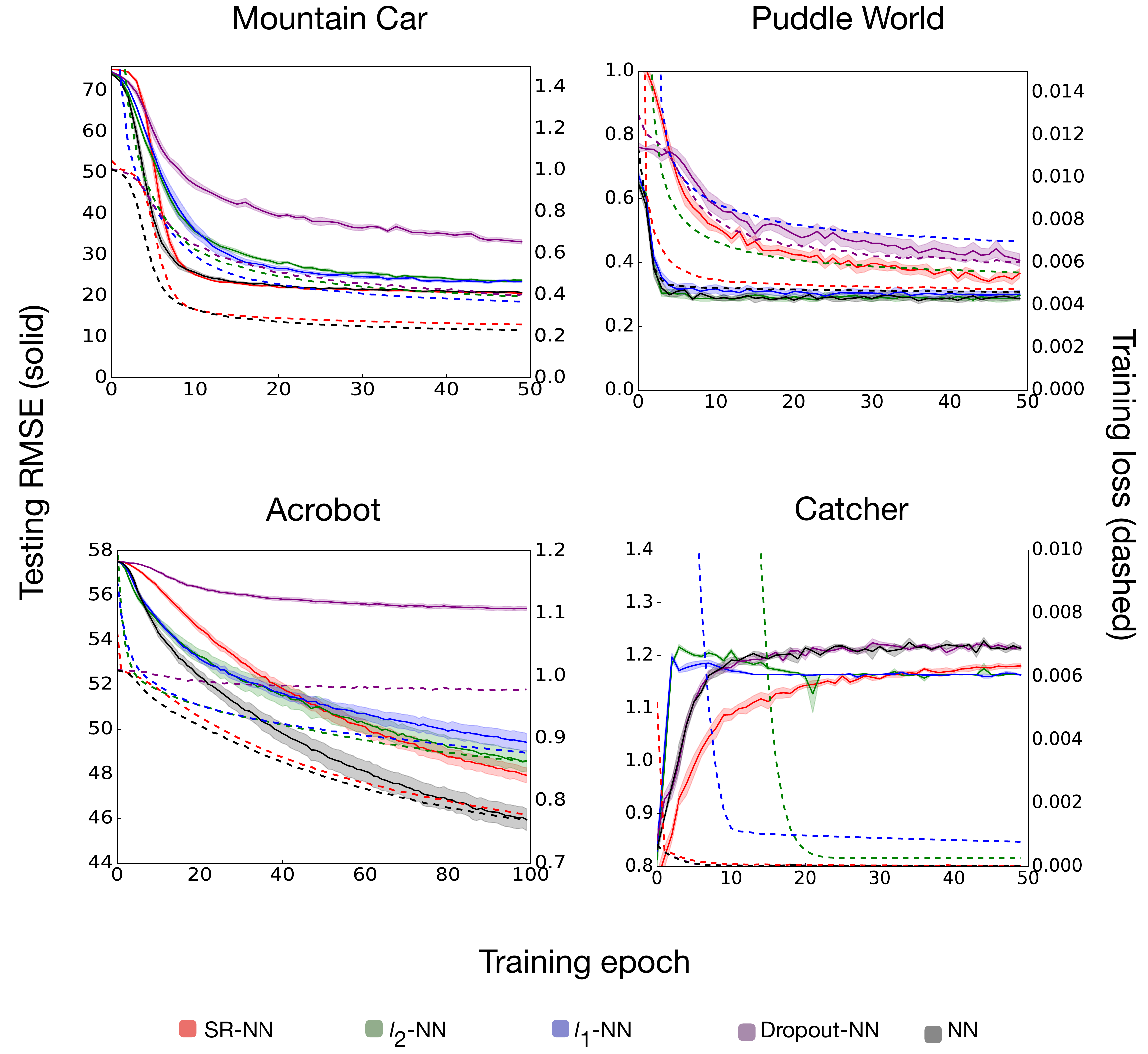}
  \caption{Learning curve during the training of neural networks with different regularization methods.}
  \label{fg:curve-reg}
\end{figure}

\section{Experimental Details}
\label{exp_detail}
\subsection{Policies to generate training and testing data}
In Mountain Car, we use the standard energy pumping policy with 10\% randomness. In Puddle World, by a policy that chooses to go North with 50$\%$ probability, and East with 50$\%$ probability on each step.
The data in Acrobot is generated by a near-optimal policy. In Catcher, the agent chooses to move toward the apple with 50$\%$ probability, and selects a random action with 50$\%$ probability on each step; and gets only 1 life in the environment.

\subsection{Tile Coding}
We compare to Tile Coding (TC) representation, a well-known sparse representation, as the baseline. TC uses overlapping grids on the observation space, to convert a continuous space to a discrete dimensional space. The representations generated by it are sparse and distributed based on a static hashing technique. We experiment with several configurations for the fixed representation, particularly with grid-sizes(N) in $\{4,8,16\}$ and number of tilings (D) in $\{8,16,32\}$.
We use a hash size of 8192, which is significantly larger than the largest feature size of 256, as used in the other learned representation models we compare to. The results shown in Figure \ref{fg:control-reg} are for the best configuration of the static tile-coder after a sweep.

\subsection{Training neural networks}
\textbf{Architecture and optimizer}: We used neural networks with two hidden layers. The first layer 32 hidden units. The second layer, which is the representation layer used for prediction, has 256 units. We optimized the neural network weights using Adam optimization \cite{kingma2014adam} with a batch size of 64. The neural network weights are initialized based on He initialization \cite{he2015delving}. That is, the neural networks weights are initialized with zero-mean Gaussian distribution with variance equals to $2/n_l$, where $n_l$ is the number of input nodes for layer $l$.

\begin{figure}
  \centering
  \includegraphics[width=\linewidth]{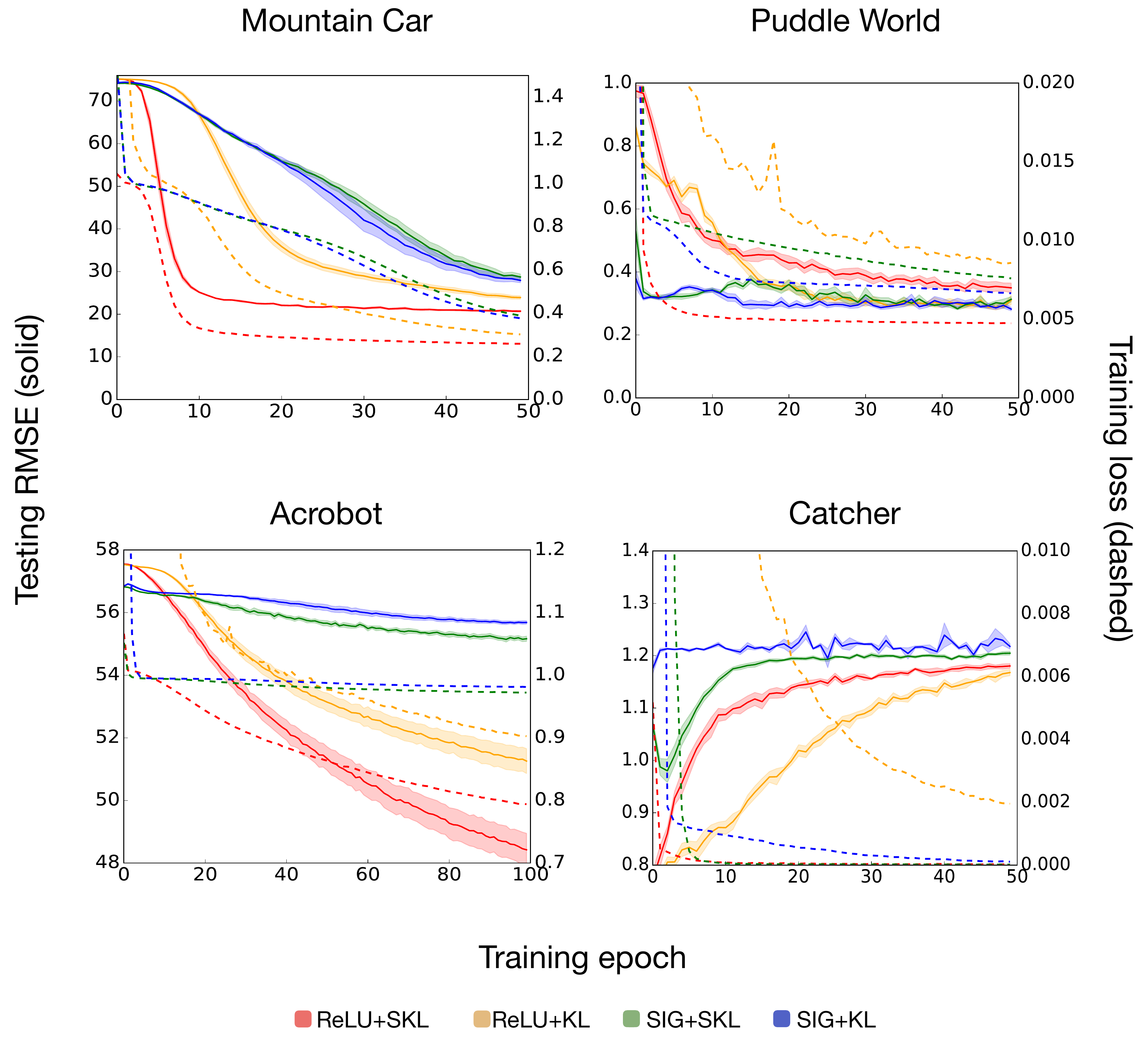}
  \caption{Learning curve during the training of neural networks with different distributional regularizers.}
  \label{fg:curve-dist}
\end{figure}

\textbf{Representation hyperparameters}: The range of grid search for the representation hyperparameters are as follows:
\begin{align*}
    \lambda_{KL} & \in \{0.1, 0.01, 0.001\} \\
    \beta & \in \{0.05, 0.1, 0.2\} \\
    \lambda_{NN} \text{ for $\ell_1$ and $\ell_2$}  & \in \{0.1, 0.01, 0.001, 0.0001\} \\
    \text{dropout probability } p  & \in \{0.1, 0.2, 0.3, 0.4, 0.5\} \\
    k \text{ for k-sparse} & \in \{16, 32, 64, 128\} \\
    k \text{ for WTA} & \in \{6.25\%, 12.5\%, 25\%, 50\%\} \\
\end{align*}

\textbf{Algorithmic choices}: For k-sparse networks, only the top-k hidden units in the representation layer are activated. We also use scheduling of sparsity level described in the original paper \cite{makhzani2013k}. If used in conjunction with a distributional regularizer, the top-k nodes are chosen before application of the distributional regularizer. For dropout, given the form of the supervision goal (MSTDE), the same dropout mask is chosen to generate the representation for both states $S_{t+1}$ and $S_t$\footnote{We have experimented with different dropout masks for $S_{t+1}$ and $S_t$, and the result suggests that it is not able to learn good representations even for prediction across all domains.} -- this preserves dropouts role as regularizer w.r.t. the target, and promotes diversity in learning.

\textbf{Grid-search evaluation metric}: The learned representations are then used for on-policy control in Sarsa(0) with fixed $\epsilon=0.1$. The value function for Sarsa is initialized with zero-mean Gaussian distribution with small variance.
For sparse representations, we use semi-gradient Sarsa with step decay learnining rate. For dense representatinos, we use adaptive learnining rate method RMSprop \cite{hinton2012neural}. The initial learning rate for Sarsa(0) is swept in the set:
\begin{align*}
    \alpha_0 & \in \{0.1, 0.04, 0.01, 0.004, 0.001, 0.0004, 0.0001\}
\end{align*}
All the sweeps for selecting the representation learning hyperparameters across domains use 50 epochs and 10 runs.
%search to  used in Section \ref{exp} are trained on 10,000 transitions over a fixed number of epochs.

\begin{figure}
  \centering
  \includegraphics[width=\linewidth]{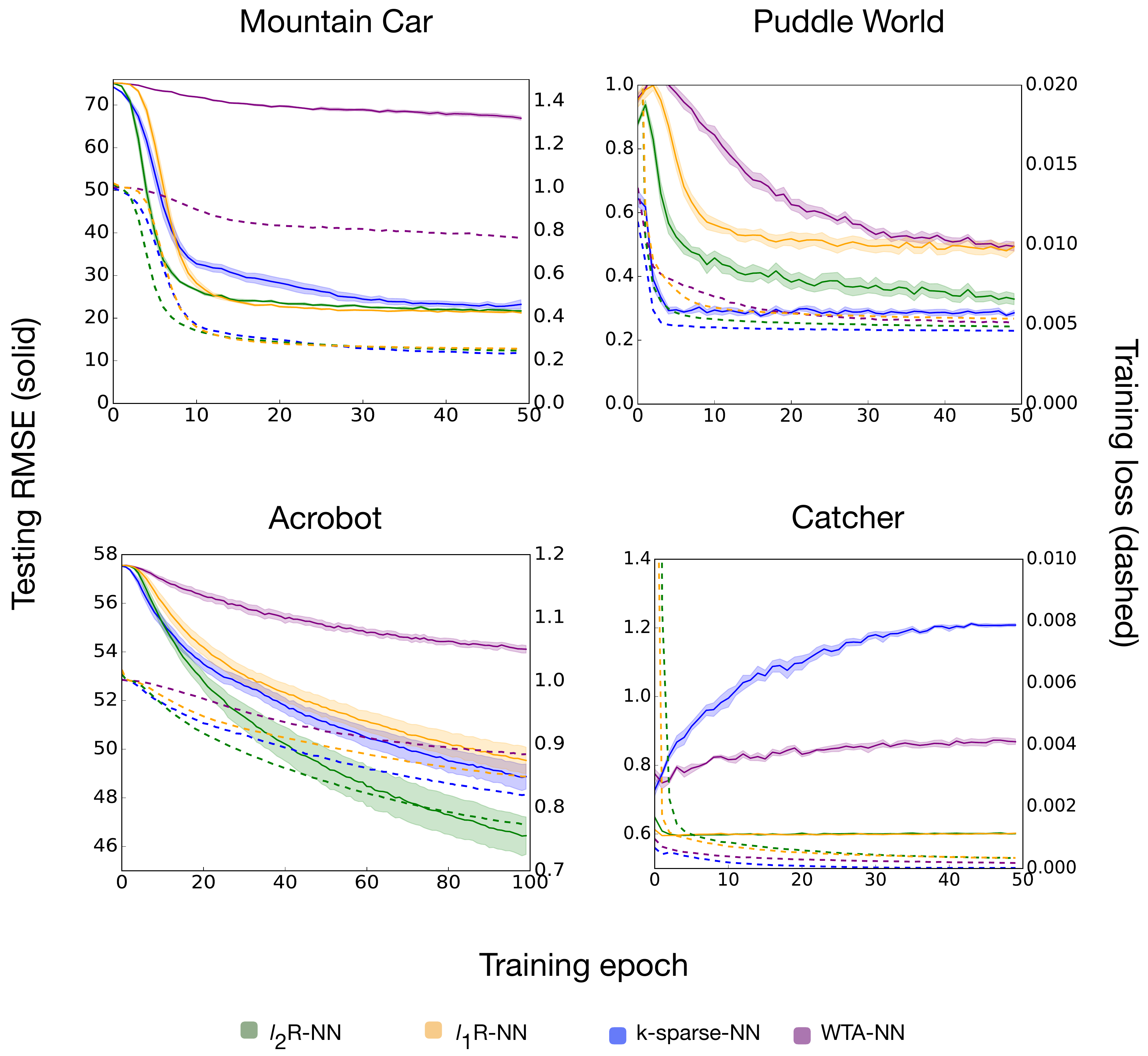}
  \caption{Learning curve during the training of neural networks with different sparsity inducing approaches.}
  \label{fg:curve-sparse}
\end{figure}

\begin{figure}
  \centering
  \includegraphics[width=0.98\linewidth]{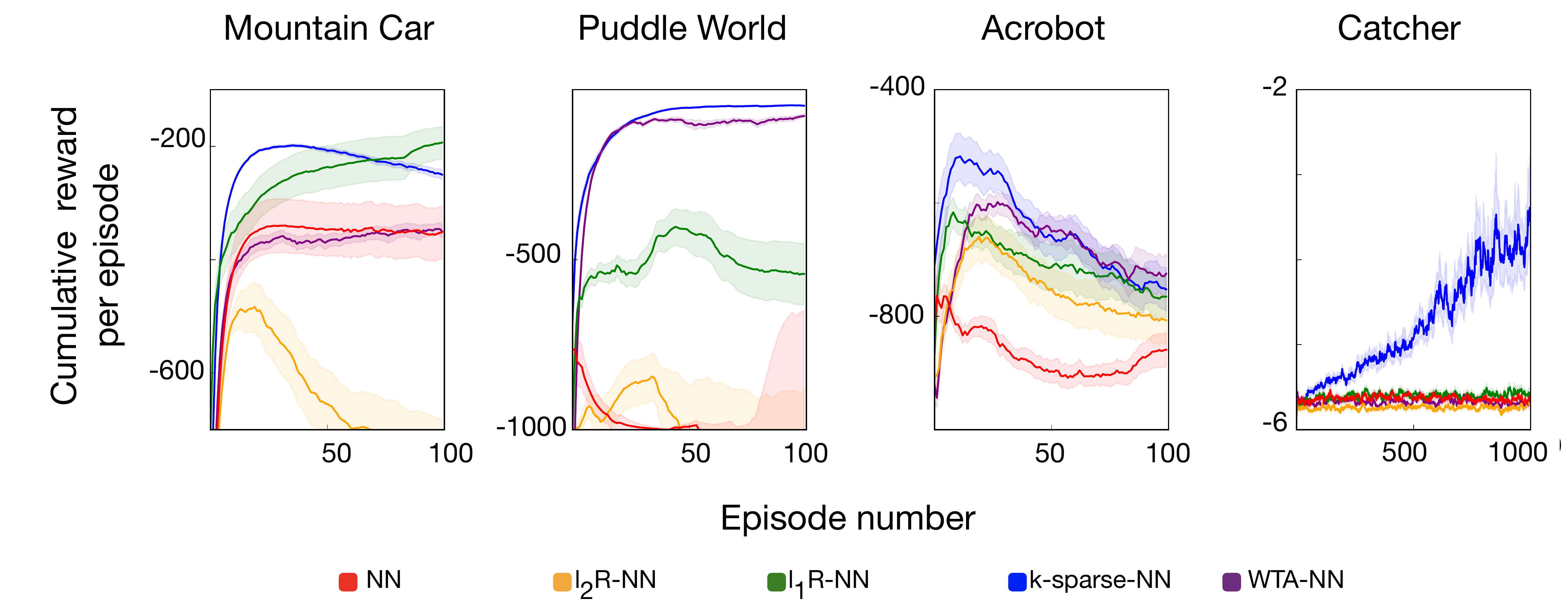}
  \caption{Learning curves for Sarsa(0) comparing various sparsity inducing networks with Sigmoid activation.}
  \label{fg:exp2-sigmoid}
\end{figure}

\begin{figure}
  \centering
  \includegraphics[width=0.98\linewidth]{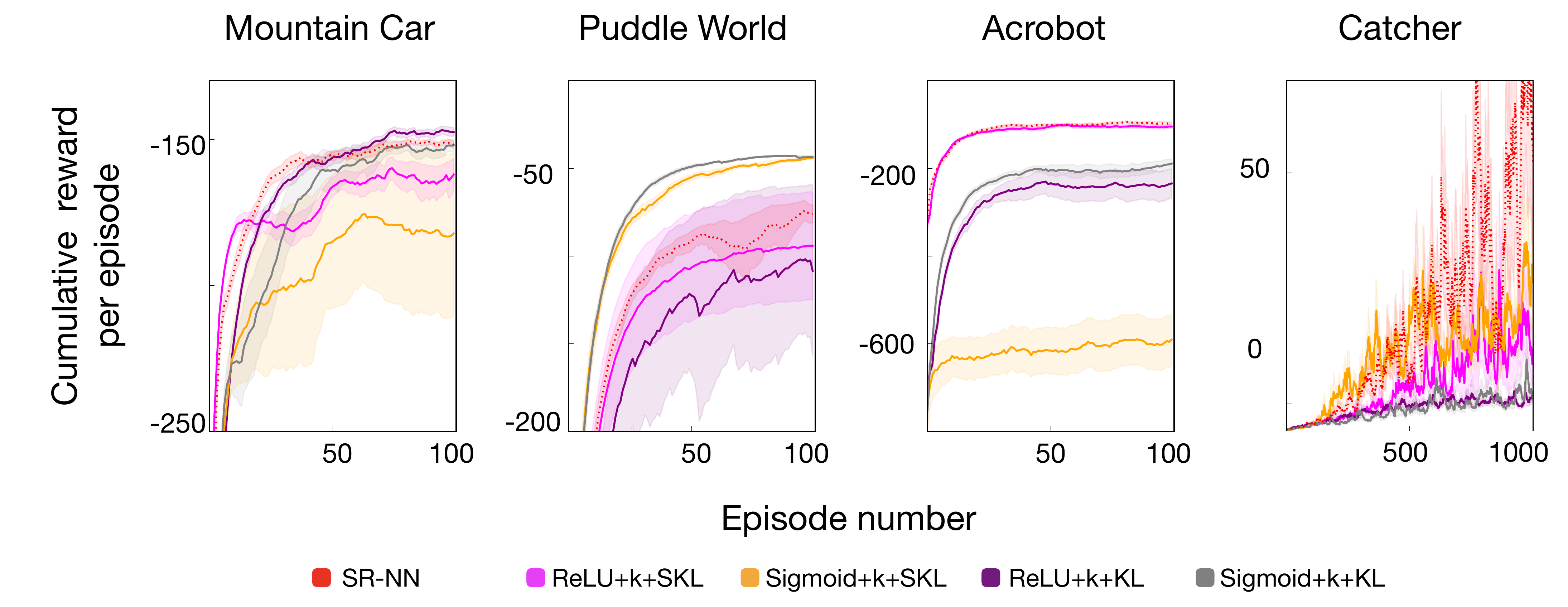}
  \caption{Learning curves for Sarsa(0) comparing various variants of distributional regularizers with k-sparse.}
  \label{fg:exp2-k-sparse2}
\end{figure}

\begin{figure}
  \centering
  \includegraphics[width=\linewidth]{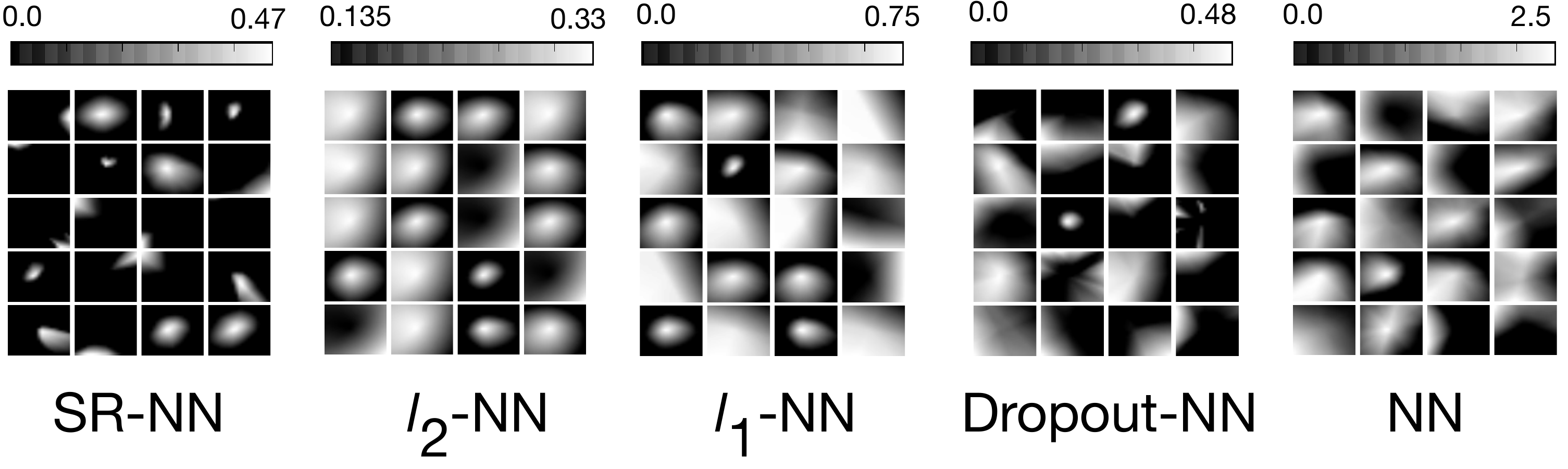}
  \caption{Heatmaps of activations comparing SR-NN to different regularization strategies and NN in Mountain Car.}
  \label{fg:heatmap-mc-reg}
\end{figure}

\begin{figure}
  \centering
  \includegraphics[width=\linewidth]{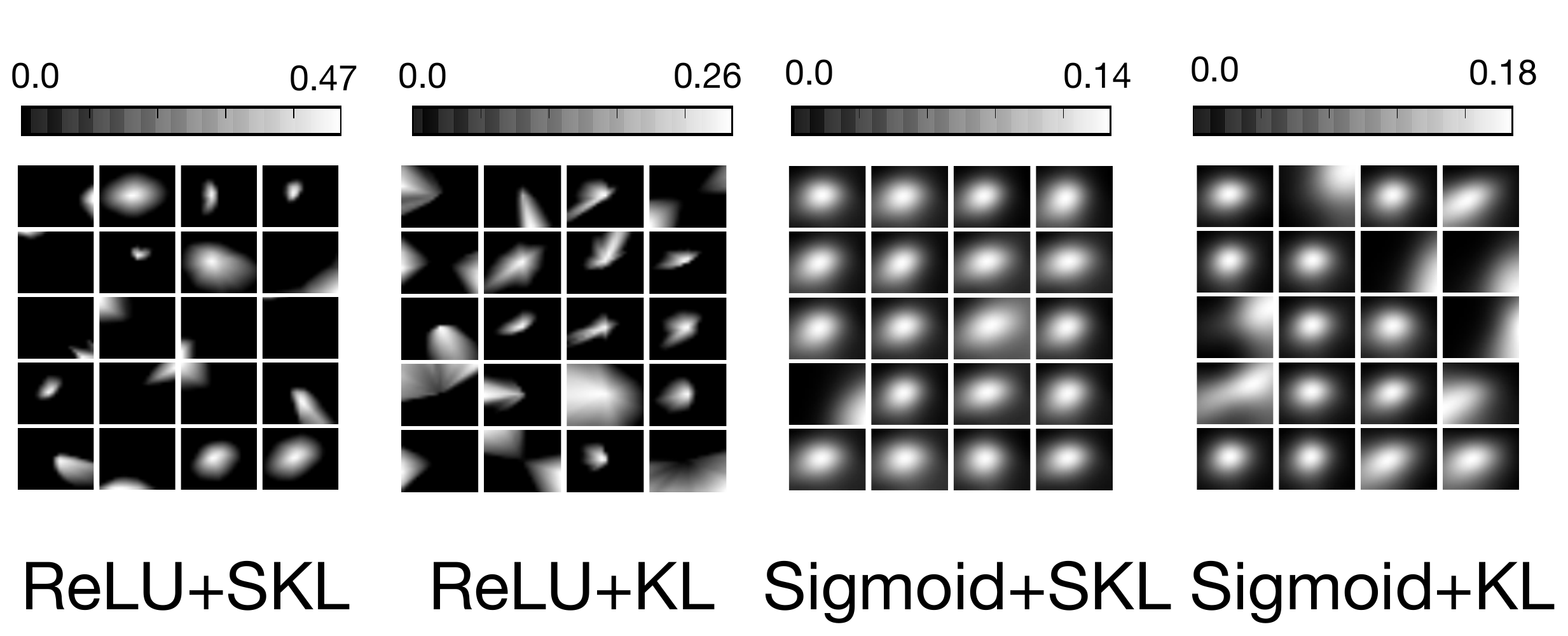}
  \caption{Heatmaps of activations with different Distributional Regularizers in Mountain Car.}
  \label{fg:heatmap-mc-dist}
\end{figure}

\textbf{Learning curves}: The chosen hyperparameters are used to train a good representation (saturated testing loss -- 100 epochs for Acrobot, and 50 epochs for other domains), following which it is used for on-policy control with Sarsa(0). While the control performance is focused on in the main paper, the learning curve during the representation training phase is shown in Figures \ref{fg:curve-reg}, \ref{fg:curve-dist} and \ref{fg:curve-sparse}. The metric on the y-axis is the Root Mean Squared Error (RMSE), which is evaluated as follows:
\begin{align*}
\text{RMSE} = \sqrt{\frac{\sum_{\xvec\in \Xmat_{test}}(\hat{V}(\xvec)-V^*(\xvec))^2}{\Xmat_{test}}}
\end{align*}
where $\Xmat_{test}$ is the set of test
states for which the representations have been extracted, $\hat{V}(\xvec)$ is the estimated value of state $\xvec$ and $V^*(\xvec)$ is the true value of state $\xvec$ computed using Monte Carlo rollouts. The number of test states are 5000 for benchmark domains and 1000 for Catcher.
Most algorithms converge to a good solution within 50 epochs in Mountain Car, Puddle World and Catcher, and 100 epochs in Acrobot as shown in the curves. The curves are for representation purposes and only averaged over 5 runs.
All learning curves for Sarsa(0) are averaged over 30 runs, and are plotted with exponential moving average ($\beta = 0.1$).

\section{More results}
\subsection{Control curves}
We perform the evaluation of sparsity inducing networks with Sigmoid activation. Figure \ref{fg:exp2-sigmoid} shows the performance of Sarsa(0) with representations learned by different networks. k-sparse and WTA performs well in Puddle World, however, none of these representations are effective across all domains.

The learning curves for various k-sparse networks with distributional regularizers are in Figure \ref{fg:exp2-k-sparse2}. It suggests that k-sparse (ReLU+k+SKL) provides no improvement over just using distributional regularizer for ReLU activation (SR-NN).

\begin{figure*}
  \centering
  \includegraphics[width=\linewidth]{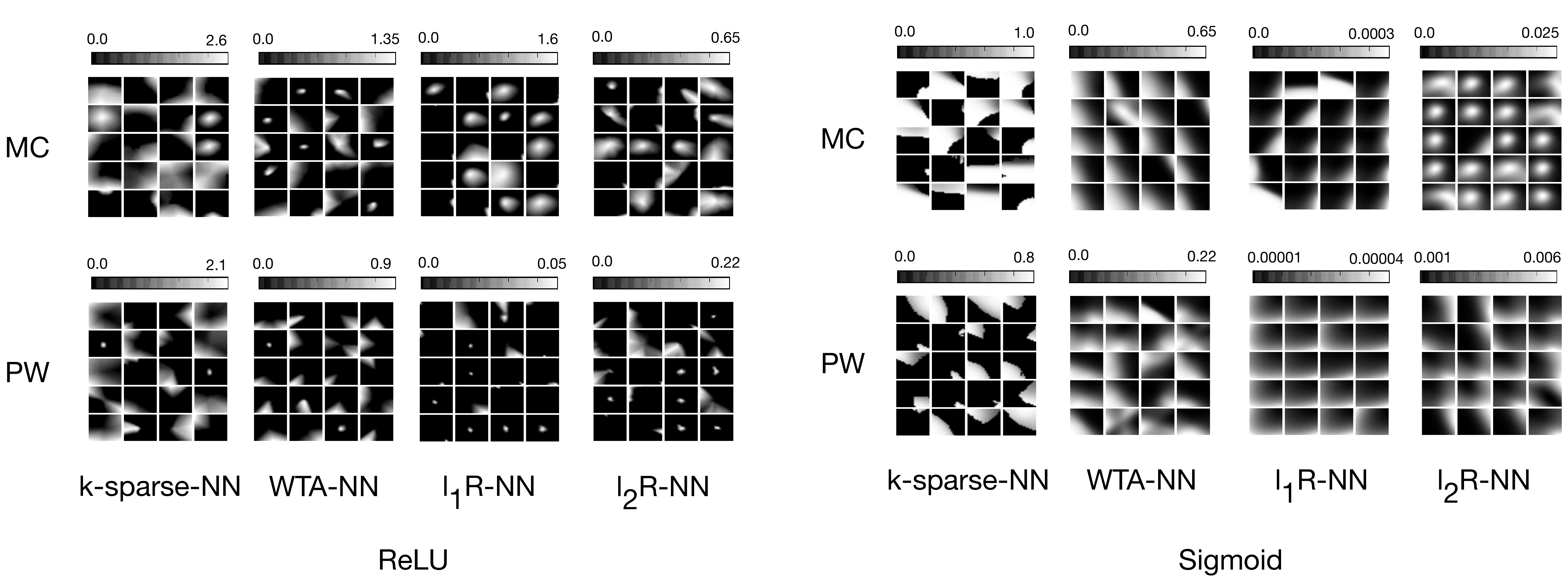}
  \caption{Heatmaps of activations for nodes from other networks which aim to generate sparse representations (ReLU and Sigmoid activation).}
  \label{fg:heatmap-sparse-both}
\end{figure*}

\subsection{Activation heatmaps}
The activation heatmaps for randomly selected neurons (excluding dead neurons) in Mountain Car with different regularization stratergies are shown in Figure \ref{fg:heatmap-mc-reg}, and with differnt Distributional Regularization designs are shown in Figure \ref{fg:heatmap-mc-dist}.
Heatmaps for sparsity inducing networks with ReLU activations and Sigmoid activation, for Mountain Car and Puddle World are shown in Figure \ref{fg:heatmap-sparse-both}.

\begin{figure*}
  \includegraphics[width=\linewidth]{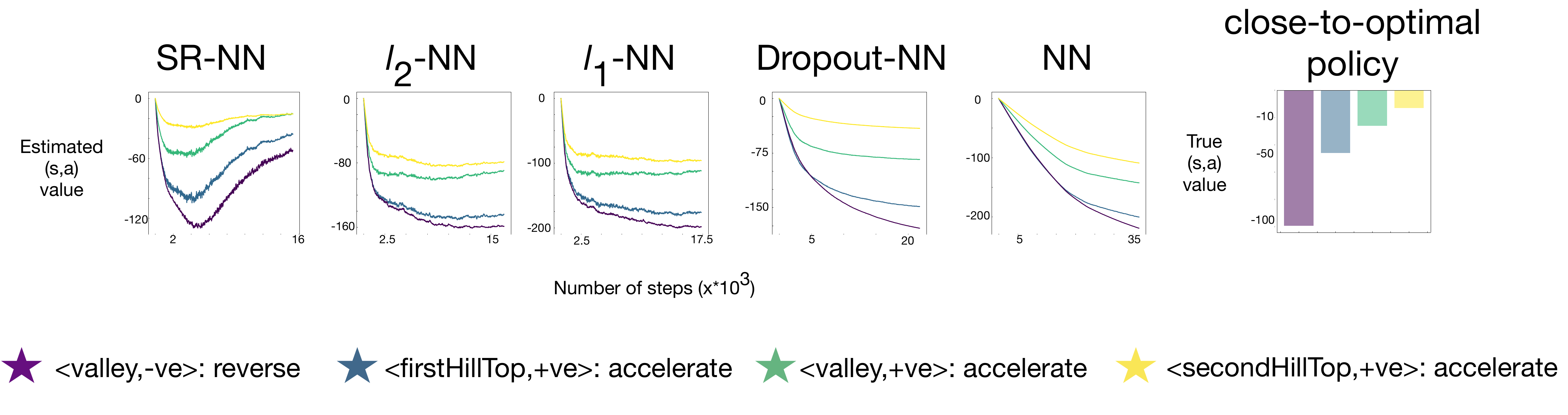}
  \caption{This plot compares the bootstrap estimates of SR-NN to various regularization stratergies during on-policy control for the chosen 4 state-action pairs denoted in the following format in the legend: <car-position,car-velocity>:action. Again, we see that the relative ordering of bootstrap values is maintained with SR-NN, and it tends towards the true values of the ($\epsilon=0.1$)-optimal policy. The optimal policy estimates (currently) use 10k Monte Carlo rollouts with a powerful close-to-optimal tile-coder policy.}
  \label{fg:bootstrap_dist}
\end{figure*}

\subsection{Bootstrap values}
The bootstrap values comparing SR-NN to different regularization strategies, and NN are shown in Figure \ref{fg:bootstrap_dist}. Since it is not easy to visualize 4-dimensional space, we only include the bootstrap value result of Mountain Car here.

\subsection{Activation overlap}
We show the overlap of representations learned by different networks in Table \ref{tab:overlap} for Mountain Car and Puddle World. $\ell_2$R-NN and $\ell_1$R-NN have low overlap values. However, the regularizers tend to push many neurons to be activated for a really small region to reduce penalty as shown in Figure \ref{fg:heatmap-sparse-both}. SR-NN, on the other hand, learns a more distributed representation.

\begin{table}
\begin{center}
\begin{tabular}{ c | c c }
  & Mountain Car & Puddle World \\
  \hline
  SR-NN & 16.8   & 8.8 \\
  \hline
  $\ell_2$-NN & 112.3  & 111.5 \\
  \hline
  $\ell_1$-NN & 109.5  & 142.5 \\
  \hline
  Dropout-NN & 72.5  & 31.2 \\
  \hline
  NN      & 106.5 & 54.0 \\
  \hline
  ReLU+KL  & 36.8  & 71.4 \\
  \hline
  SIG+SKL & 256.0 & 256.0 \\
  \hline
  SIG+KL  & 256.0 & 256.0 \\
  \hline
  k-sparse-NN & 36.6 & 61.8 \\
  \hline
  WTA-NN & 24.8 & 6.5 \\
  \hline
  $\ell_2$R-NN & 30.0 & 3.8 \\
  \hline
  $\ell_1$R-NN & 10.5 & 0.4 \\
\end{tabular}
\end{center}
  \caption{Activation overlap in Mountain Car and Puddle World. For Mountain Car, the numbers are the average overlap over all pairs of selected states defined in Figure \ref{fg:bootstrap_dist}.}
  \label{tab:overlap}
\end{table}

\end{document}